\begin{document}

\begin{frontmatter}
%
%
  \title{Graph Classification with GNNs : \\ Optimisation, Representation \& Inductive Bias}
  
\author[A]{\fnms{P. Krishna}~\snm{Kumar}}

\author[A,B]{\fnms{Harish G.}~\snm{Ramaswamy}}  

\address[A]{Department of Computer Science \& Engineering, IIT Madras}
\address[B]{Department of Data Science \& Artificial Intelligence, IIT Madras }
\address{\texttt{\{pkrishna,~hariguru\}@cse.iitm.ac.in}}

\begin{abstract}

Theoretical studies on the representation power of GNNs have been centered around understanding the equivalence of GNNs, using WL-Tests for detecting graph isomorphism. 
In this paper, we argue that such equivalence ignores the accompanying optimization issues and does not provide a holistic view of the GNN learning process. 
We illustrate these gaps between  representation and  optimization with examples and experiments.
We also explore the existence of an implicit inductive bias (e.g. fully connected networks prefer to learn low frequency functions in their input space) in GNNs, in the context of graph classification tasks. 
We further prove theoretically that the message-passing layers in the graph, have a tendency to search for either discriminative subgraphs, or a collection of discriminative  nodes dispersed across the graph, depending on the different global pooling layers used. 
We empirically verify this bias through experiments over real-world and synthetic datasets.
Finally, we show how our work can help in incorporating domain knowledge via attention based architectures, and can evince their capability to  discriminate coherent subgraphs.

\end{abstract}
\end{frontmatter}

\newcommand\todo[1]{\textcolor{red}{[[#1]]}}
\newcommand\smz[1]{$\pm$ \scriptsize{{#1}}} 

\newcommand\supplementaryOn[1]{#1}

\newcommand\captionz[1]{\vspace{2mm}\caption{#1} \vspace{2mm}}

\newtheorem{theorem}{Theorem}
\newtheorem{lemma}[theorem]{Lemma}
\newtheorem{corollary}[theorem]{Corollary}
\newtheorem{proposition}[theorem]{Proposition}
\newtheorem{fact}[theorem]{Fact}
\newtheorem{assumption}[theorem]{Assumption}
\newtheorem{definition}{Definition}
\newtheorem*{remark}{Remark}

\newenvironment{hproof}{%
  \renewcommand{\proofname}{Proof sketch}\proof}{\endproof}

\newcommand\numberthis{\addtocounter{equation}{1}\tag{\theequation}}

\newcommand\real{\mathbb{R}}
\renewcommand\natural{\mathbb{N}}

\newcommand\relu{\mathrm{ReLU}}
\newcommand\T{\mathsf{T}}
\newcommand\mlp{\mathrm{MLP}}

\newcommand{\cg}{\textnormal{\textsl{g}}}
\newcommand\multiset[1]{\{\!\!\{ #1 \}\!\!\}}
\newcommand\ngbr[1]{\mathcal{N}(#1)}

\newcommand\dd[2]{\frac{\partial {#1}}{\partial {#2}}}
\newcommand\DD[2]{\frac{\mathrm{d} {#1}}{\mathrm{d} {#2}}}
\newcommand\loss{\mathrm{\mathbf{L}}}

\newcommand\attn{\mathbf{a}}

\newcommand\wgt{\mathbf{w}}

\newcommand\aggr{\hat{v}}

\newcommand\tp{^\intercal}

\newcommand\data{\mathcal{D}}
\newcommand\dataO{\mathcal{D}_0}
\newcommand\dataI{\mathcal{D}_1}
\newcommand\dataP{\mathcal{D}_\perp}

\newcommand\struct{\mathcal{S}}
\newcommand\vjv[1]{v\tp {#1}}
\newcommand\vstarv[1]{{v^*}\tp {#1}}
\section{Introduction}

Graph Neural Network's (GNNs) have enabled end-to-end learning over \textit{relational} data due to differentiable loss functions, that can be trained with non-linear components like multi-layer perceptrons.
Several real world applications such as fake news detection \citep{DBLP:conf/acl/MehtaPG22}, physical simulation \citep{DBLP:conf/nips/BattagliaPLRK16}, traffic delay estimation \citep{DBLP:conf/cikm/Derrow-PinionSW21}, and fraudulent transactions prediction\citep{DBLP:journals/tkde/ChengWZZ22} have GNNs as a crucial component. 

Graph classification is one of the most common downstream graph neural processing applications\citep{DBLP:conf/iclr/ErricaPBM20}.
While different GNN operators update node level features via message-passing, 
the graph level predictions are done by pooling the member nodes into a single unified representation. 
This pooling is either done by coarsening functions that gradually reduce the size of the graph \citep{DBLP:conf/icml/BianchiGA20}, or with the help of global pooling methods like average, max, sum \citep{DBLP:conf/nips/DuvenaudMABHAA15}.

The success of GNNs has also led to several attempts toward defining theoretical boundaries of what GNNs can and can not do. 
The strengths and weaknesses of graph neural networks have been extensively evaluated in terms of their \textit{representation} capabilities. 
Most studies have focused on the capability of message-passing networks using Weisfeiler-Lehman test, which is constrained by its limitations in distinguishing isomorphic graphs  \citep{DBLP:books/cu/G2017}. 
The proposed architecture of Graph Isomorphism Network \citep{DBLP:conf/iclr/XuHLJ19} implements an MLP to model injective function, and be as powerful as 1-WL test. 
Higher order generalization of WL tests, using a combination of equivariant and invariant functions, is shown to surpass limitations of simple GNNs that use message-passing \citep{DBLP:conf/aaai/0001RFHLRG19}. 
A disparity exists between the theoretical understandings and optimization practices within GNNs. 
For instance, previous works indicate that subgraph-based counting/classification is inherently unattainable for subgraphs exceeding a size of three \citep{DBLP:conf/nips/Chen0VB20}. However, there are empirical findings to challenge these constraints, primarily due to the relaxation of restrictive assumptions, such as fixed-node features derived from a countable set.

In this work, we introduce an implicit bias inherent in graph convolutional networks when integrated with self-attention-based global pooling functions. To our knowledge, this marks the initial investigation characterizing the graph-classification task with a focus on attention pooling layers. We empirically explore the full spectrum of attention, ranging from average pooling (equivalent to no attention) to max pooling (singular focus). Additionally, we substantiate our empirical findings with theoretical insights and conduct an in-depth analysis of attention-based global pooling.
We summarize our main contributions as follows:
\begin{itemize}
\item \textbf{Develop an experimental setup to elicit GNN's implicit bias: } 
We introduce a dataset with unique characteristics of \textit{occurrence} and \textit{connectedness}, that underscores the ambiguity inherent in GNN models for graph classification tasks. Our approach provides empirical insights into deducing the implicit bias of GNNs by closely examining their nuanced behavior on synthetic grid-graph world.
\item \textbf{Theoretically analyze of attention based global pooling:} We employ gradient-flow \citep{DBLP:conf/nips/ChizatB18} to prove the existence and uniqueness of final vector, towards which our learned model parameters align. Our main result formalizes the biased nature of attention pooling, and shows it is preferring to use \textit{closely-connected-substructures} as discriminative features, rather than collection of nodes that may be dispersed across the graph. 
\item \textbf{Empirical validation:} We study realworld datasets with planted ground-truth, to show the ambiguity caused by the inherent bias. 
We isolate the effect of the linear classifier from the message-passing layers. Lastly, our study underscores practical implications in tasks where domain knowledge informs graph classification. It aids in discerning whether classification hinges on the presence of coherent subgraphs or on merely the presence of nodes, irrespective of their neighbors.
\end{itemize}

\section{Subgraph Based Graph Classification Task}
\label{sec:subgraph-find}

Graph Neural Networks learn to infer graph labeling functions by leveraging latent representation of nodes, that extend to the entire graph structure. 
A trained classifier $h(X,A)$ operates on both node features and graph topology, producing a class label as output. However, the mechanisms driving classification are not fully understood. 
It remains unclear whether classifications arise from the presence or absence of specific structural components within the graph, such as nodes belonging to distinct classes forming unique subgraphs, or if predictions stem from numerical decision boundaries imposed over aggregated graph representations. 
Recent efforts in \textit{explaining} GNN predictions aim to identify subgraphs or disjointed nodes that cause maximal change to the specific predictions  \citep{DBLP:conf/iclr/SchlichtkrullCT21,DBLP:conf/nips/YingBYZL19,DBLP:conf/log/AmaraYZHZSBS022}. 
Despite these endeavors, whether GNN architectures inherently learn to discern specific graph traits for prediction purposes, remains an open question.
We aim to address such knowledge gaps through the subsequent sections of this paper.

We establish a common notation and terminology used throughout this paper.
$\data$ denotes a dataset of attributed graphs with $N$ nodes in each graph. 
Each graph  $G \in \data$ is represented by a tuple of its node-feature matrix and edge adjacency matrix $(X,A)\in (\real^{N \times d}, \{0,1\}^{N \times N})$. 
For natural number $K \in \mathbb{N}$,  $[K]$ denotes sequence $\{1,2,3 \ldots K\}$. $\struct$ denote an ordered tuple $= (s_1, s_2, \ldots s_M) \in [K]^M $.
Let   $G^* = (X^*,A^*)\in (\real^{M \times d}, \{0,1\}^{M \times M})$ be a special graph with $M$ nodes, where $M<N$. In our graph classification problem, the set of graphs in $\data$ are labelled based on their relation to $G^*$, and our goal is to learn a model that can learn to map a graph to its correct label. Note that $G^*$ is part of the data generation process, is not known to the learning algorithm, and is not necessarily required to be recovered by the learned graph classification model.
We denote the node features and corresponding adjacency matrix of subgraph $G^*$ by $X_{[\struct,:]} \in \real^{M\times d}$ and $A_{[\struct,\struct]} \in \{0,1\}^{M\times M}$. 
The matrix subscripts corresponds to rows and columns of indices in the ordered tuple $\struct$.

To illustrate this concept via a simple example, let's examine a dataset $\data$, that comprises of numerous graphs $G \in \data$. 
As per the law of dichotomy, each graph can either contain or lack a subgraph representing an orange-blue-green (O-B-G) chain 
\big(\scalerel*{\includegraphics{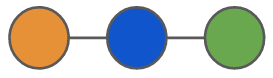}}{B}\big), where colors denote the respective node features. 
Let us define a labeling function $y(G)$, such that $y(G) = +1$ if the O-B-G subgraph is present in $G$, and $y(G) = -1$ otherwise. Consequently, the dataset $\data$ can be partitioned into two disjoint subsets, $\data_{1}$ and $\data_{-1}$, based on their respective class labels.

\subsection*{Can GNNs detect presence and absence of subgraphs?}\label{sec:detect}
To ascertain the capability of GNNs in discerning graphs based upon the existence or non-existence of a specified subgraph $G^*$, we have devised a simple yet informative experiment. 
We generated grid-graphs measuring $12\times 12$ and randomly assigned one-hot embedding to its node features, corresponding to each node taking one of 4 colors. 
Subsequently, we delineated two distinct graph sets, denoted as $\data_1$ and $\data_{-1}$, contingent upon the presence of a subgraph comprising a special chain of length equal to three.
The precise method used to generate $\data_1$ and $\data_{-1}$, and to ensure the presence or absence of this subgraph is detailed in Section \ref{sec:setting}.
All graphs within $\data_1$ were assigned a ground-truth label of $y=1$, while those in $\data_{-1}$ were labeled with $y=0$.

\begin{table}[t!]
  
    \renewcommand*{\arraystretch}{1.3}
    \centering
\captionz{
Performance of GNN models that are trained on $12\times 12$ with $\data_1, \data_{-1}$, and tested over unseen grid-graphs of varying sizes.
Results show that GNNs are capable of learning to distinguish graphs based on presence/absence. (Sec.\ref{sec:detect}) \label{tab:rep_empirical} } 
\begin{tabular}{|c|c|c|c|c|c|}
\hline
Model &
  \begin{tabular}[c]{@{}c@{}}Pooling \\ [-0.3em] func\end{tabular} &
  \begin{tabular}[c]{@{}c@{}}Training \\ [-0.3em] Loss\end{tabular} &
  \begin{tabular}[c]{@{}c@{}}Train \\ [-0.3em] Accuracy\end{tabular} &
  \begin{tabular}[c]{@{}c@{}}Test Acc\\ [-0.3em] 12 x 12\end{tabular} &
  \begin{tabular}[c]{@{}c@{}}Test Acc\\ [-0.3em] 13 x 13\end{tabular} \\ \hline
\multirow{3}{*}{GCN} & MAX  & 0.0766 & 0.987 & 0.979 & 0.991 \\ \cline{2-6} 
                     & AVG  & 0.0451 & 1.000 & 0.998 & 0.517 \\ \cline{2-6} 
                     & ATTN & 0.0948 & 0.978 & 0.923 & 0.986 \\ \hline
\multirow{3}{*}{GAT} & MAX  & 0.0715 & 0.991 & 0.968 & 0.994 \\ \cline{2-6} 
                     & AVG  & 0.1213 & 1.000 & 0.998 & 0.675 \\ \cline{2-6} 
                     & ATTN & 0.0293 & 0.993 & 0.973 & 0.995 \\ \hline
\end{tabular}
\end{table}

After training GNN models on the datasets $\data_1$ and $\data_{-1}$, we tested an independent set of grid-graphs, generated with dimensions $13\times 13$ and $12\times 12$. 
Table \ref{tab:rep_empirical} presents the empirical results, demonstrating the efficacy of popular GNN models, such as GCN and GAT, when coupled with various global pooling mechanisms. 
Notably, these models showcase remarkable generalization capabilities over previously unseen test data. Across all model-pooling combinations, the test accuracy consistently surpass the $90\%$ threshold for both grid-graph sizes, except for the average pooling mechanism on the $13\times 13$ grid. Given that all models achieve high accuracy and are optimized until attaining near-zero training loss (binary cross-entropy), we can say that GNNs possess the empirical capacity to differentiate between the presence and absence of the designated subgraph.

In the following sections, we will place this empirical observation in the context of theoretical results and attempt to reconcile multiple views.

\section{Limits of Representation Power Arguments}
\label{sec:limits}

There is an apparent difference in the theoretical representation pertaining to power of neural networks and their practical implication in real settings. 
For example, deeper neural networks have a definite advantage over shallow networks. It is known that networks with polynomial increase in depth can approximate functions that exponentially growing width networks cannot \cite{DBLP:conf/colt/Telgarsky16}. 
Simple architectures like deep feed-forward networks with piece-wise linear activation functions (like ReLU) can distinguish exponentially greater input regions due to the increased depth \cite{DBLP:conf/nips/MontufarPCB14}.  
However, the mere ability of deeper networks to represent complex functions does not guarantee that they will be learned by gradient descent. For example, it has been observed that the number of distinct linear regions in ReLU based networks grows linearly along any single dimension. 
It highlights the gap between theoretical possibilities and empirical observations while training neural networks.

Similar to feed-forward networks, graph neural networks are shown to have restricted representation power, and their theoretical limitations have been an area of active research \citep{DBLP:journals/corr/abs-2204-07697}. 
Message passing neural networks (MPNNs), encompassing different convolution operators for GNNs, are theoretically as powerful as 1-Weisfeiler Lehmann test for distinguishing non-isomorphic graphs \citep{DBLP:conf/iclr/XuHLJ19}. 
In similar vein, higher order generalizations of $k$-WL test (for $k>2$) are known to be strictly more powerful than 1-WL and capable of learning sub-graph patterns of maximally $k$-nodes \citep{DBLP:conf/aaai/0001RFHLRG19}.
In the context of subgraph-counting, MPNNs can only count a given subgraph (or induced-subgraph) if there are 3 or fewer nodes in the subgraph (barring exceptions of star-shaped subgraphs) \citep{DBLP:conf/nips/Chen0VB20}. 
As earlier, the higher order GNNs based on $k$-WL are shown to be powerful enough to count subgraphs of utmost $k$ nodes.

The negative results from GNN's representation-theory are not as discouraging as they seem on the surface. For example, the negative result claiming that GNNs \textit{cannot} detect presence/absence of a subgraph larger than size 4 \citep{DBLP:conf/nips/Chen0VB20}, really corresponds to saying that there \textit{exists} a distribution over graphs and labels for which the best GNN based classifier has low accuracy. We argue that such adversarial distributions are not indicative of real data. For example, as shown in Table \ref{tab:rep_empirical}, GNNs are able to learn to classify graphs based on presence/absence of subgraphs.

Despite these theoretic limitations, GNNs have proven to be empirically very successful in most graph based tasks, and achieved state-of-the-art performance in many real-world applications \citep{DBLP:conf/acl/MehtaPG22,DBLP:conf/nips/BattagliaPLRK16,DBLP:conf/cikm/Derrow-PinionSW21,DBLP:journals/tkde/ChengWZZ22}. In this paper we argue that studying GNN based graph classification algorithms in the context of optimisation and generalisation is still an open and interesting objective. 

\section{Inductive Bias of GNNs Under Gradient Descent}
\label{sec:inductivebias}

\begin{table}
   \captionz{Characteristics of the partitions of dataset $\data$ with respect to nodes/edges of a subgraph $G^*$}
   \renewcommand*{\arraystretch}{1.3}
    \label{tab:d0d1}
    \centering
    \begin{tabular}{|c|c|c|}
    \hline
      Partition & \textit{Occurrence} & \textit{Connected-ness}  \\
  \hline
    $\data_1$     & $\checkmark$ &  $\checkmark$ \\
    $\data_0$     & $\times$    & $\times$  \\
    $\dataP$  & $\checkmark$  &  $\times$  \\
  \hline
    \end{tabular}
\end{table}

\begin{figure}[t!]
    \centering
    \begin{subfigure}[t]{0.115\textwidth}
        \centering
        \includegraphics[width=0.9\textwidth]{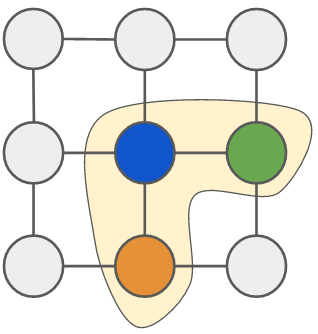}
        \captionz{$G \in \data_1$} \label{fig:toy.d1}
    \end{subfigure}
        \begin{subfigure}[t]{0.24\textwidth}
        \centering
        \includegraphics[width=0.845\textwidth]{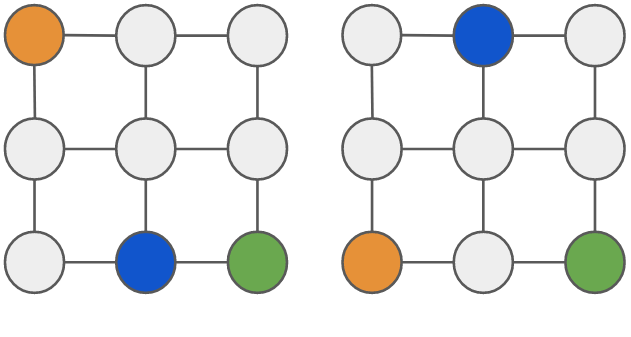}
        \captionz{$G_1, G_2 \in \dataP$} \label{fig:toy.dperp}
    \end{subfigure}
    \begin{subfigure}[t]{0.118\textwidth}
        \centering
        \includegraphics[width=0.9\textwidth]{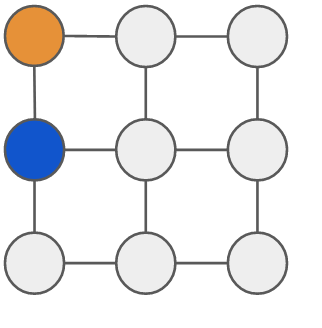}
        \captionz{$G \in \data_0$} \label{fig:toy.d0}
    \end{subfigure} 
    \captionz{Sample graphs from each partition of data $\data$. 
          (a) Subgraph $G^*$ 
          \includegraphics[height=0.7\baselineskip]{fig_subg.png}
        is present in all $G \in \data_1$.
          (b) shows the presence of all nodes of $G^*$ in $G \in \dataP$ but none of the nodes are mutually adjacent to each other.
          (c) among graphs $G \in \data_0$, subgraph $G^*$ is only partially present (of size 1 or 2).}\label{fig:toy}  \vspace{4mm}
\end{figure}

In this section, we explore if there exists a proclivity for the GNN to learn a particular category of functions.
We study one such characteristic tendency of GNNs, where it exhibits distinct inclinations towards learning specific function types, when convolution operator is coupled with varied pooling mechanisms.
Consider the graph classification task on dataset created around subgraph $G^*$ as seen in the previous sections. 
For the purpose of illustration, let $G^*$ be the O-B-G \textit{orange-blue-green} chain 
and embedded in graphs of a fixed-size grid-graph dataset
 $\data$. \footnote{Arguments can trivially extend to graphs of varying sizes and multiple classes classification. There is nothing sacrosanct about the given setting.}
Let $\data_1$ be the partition of $\data$ where subgraph $G^*$ is \textit{present}, for example Fig.\ref{fig:toy.d1} contains \textit{O-B-G}.
Similarly, let $\data_0$ be the set of all graphs within $\data$ where $G^*$ is only \textit{partially} present, as in at least one node of $G^*$ is missing. 
For instance, Fig.\ref{fig:toy.d0} does not contain the green node.
All the remaining graphs together form $\dataP$, which implies the graphs in $\dataP$ consist of all the nodes of $G^*$, (O,B,G), but do not contain all edge of $G^*$, for example Fig.\ref{fig:toy.dperp} shows that no two colored nodes are adjacent.

In the context of this example, let \textit{occurrence} denote the presence of `all' attributed-nodes of $G^*$ and \textit{connected-ness} denote the presence of `all' edges of $G^*$.
Table \ref{tab:d0d1} characterizes the mutually exclusive partitions of $\data_1, \data_0$ and $\dataP$ along these two properties and  highlights their commonalities and differences.
While $\dataP$ shares its \textit{occurrence and connected-ness} property with both the other partitions, it is not obvious whether it can be associated as more close to either $\data_1$ or $\data_0$. 
To verify this empirically, we experiment with a GNN trained over $\{\data_1 \cup \data_0\}$ and further analyze its behavior upon $\dataP$.
Say we assign label $y=1$ to $\data_1$ and $y=0$ to $\data_0$, and train a GNN over set $\{\data_1 \cup \data_0\}$ to achieve perfect accuracy. 
It remains non-obvious to predict whether graphs of $\dataP$ will be classified as label $y=1$ or $y=0$.

\begin{figure}[t]
    \centering
       
         \begin{subfigure}[t]{0.30\textwidth}
        \centering
        \includegraphics[width=0.99\textwidth]{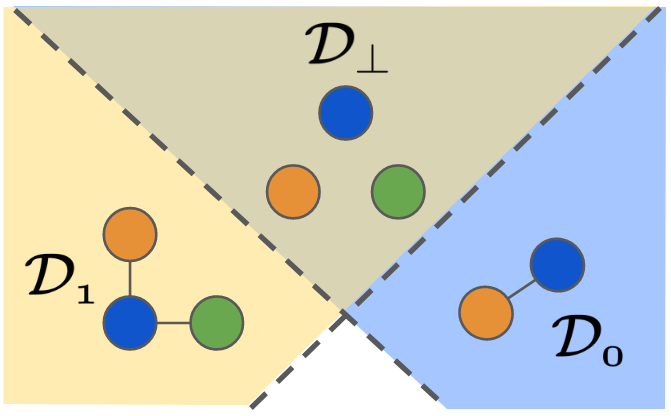}
    \end{subfigure}
    \begin{subfigure}[t]{0.108\textwidth}
        \centering
        \includegraphics[width=0.9\textwidth]{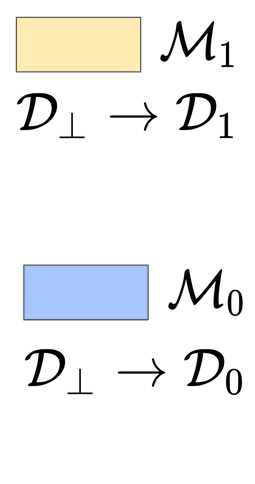}
    \end{subfigure}
    \captionz{
    Multiple legal decision-boundaries (dotted lines) that can be learned while training GNNs on $\{\data_1 \cup \data_0\}$. 
    $\dataP$ can be classified along with either $\data_1$ or $\data_0$. 
    We show that \textit{(i)} the graph convolution operator with global average pooling classifies $\dataP$ as $\data_1$ and behaves as $\mathcal{M}_1$ family of functions denoted by  
the orange 
\includegraphics[height=0.7\baselineskip]{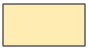}
decision regions,  
while \textit{(ii)} GCN with attention based pooling classifies $\dataP$ as $\data_0$ and learns $\mathcal{M}_0$ denoted by blue 
\includegraphics[height=0.7\baselineskip]{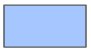} 
decision region. \label{fig:intro}}
\vspace{4mm}
\end{figure}

\subsection{Ambiguity in Graph Classification: Figure \ref{fig:intro}}
A GNN that has not seen graphs from the set $\dataP$ while training on $\{\data_1 \cup \data_0\}$ could very well assign it either label $1$ or $0$. 
However, both these assignments indicate different inductive biases.

\noindent
\textbf{Case 1.} $(\dataP \to \data_1)$  
Suppose, a trained GNN consistently classifies $\dataP$ to be same as $\data_1$, instead of $\data_0$, it shows that the model looks for the \textit{occurrence} of all nodes in $G^*$. 

\noindent
\textbf{Case 2.} $(\dataP \to \data_0)$
Conversely, if a trained GNN aligns $\dataP$ with $\data_0$, it indicates that the model emphasizes on \textit{connected-ness} of all nodes  in $G^*$.

\noindent
\textbf{How to Deduce Implicit Bias?} 
Let's understand this bias using the example depicted in Figure \ref{fig:toy}, let us assume $\data$ to be the universe of all fixed-size grid-graphs and $\data_1, \data_0$  to exhaustively contain all graphs satisfying the above discussed properties as in Table \ref{tab:d0d1}.
Now, if a trained GNN classifies graph in $dataP$ as $\data_1$ rather than as $\data_0$, it implies that the GNN is looking for presence of all the three (\textit{orange,blue,green}) nodes in each graph. 
Otherwise, there will exist $G_1, G_2 \in\data_0$ such that their nodes together will contain all $\{O,B,G\}$ but separated among the two graphs $G_1$ and $G_2$.
On the other hand, if a GNN classifies $\dataP$ as $\data_0$ instead of $\data_1$, it 
implies the GNN is looking for connection between the three (\textit{O-B-G}) nodes.
In other words, since $\data_0$ consists of all possible disintegrated subparts of $G^*$, the only reason $\dataP$ can be classified as $\data_0$ is due to the lack of $G^*$ being present as such.

\subsection{Grid-Graphs Dataset}  
\label{sec:OvC}
To empirically validate the capability of GNNs to distinguish   \textit{occurrence} versus \textit{connectedness}, we start with a simple and controllable setting of grid-graphs with node-features, as discussed above. 
Each node in the graph is assigned a random one-hot feature vector, referred to as `node-color', from a finite set of vectors.
We further derive two partition sets, $\data_1$ and $\data_0$, by selectively planting certain subgraph $G^*$ within the grid-graph. 
We assume our universe of data $\data$ restricted to the grid-graphs $\mathcal{G}$ of fixed size, and assign each node a random feature from the standard basis-vector set $\mathcal{C}=\{e_i\}$ for $i \in [K]$ classes. 
We further create a three-node graph $\mathcal{G}^*$, and assign each of its nodes distinct features, 
denoted by $\mathcal{C}'=\{e_j\}$. 
The partition $\data_1$ is generated by planting the subgraph $\mathcal{G}^*$ in grid-graphs $G \in \data$. We randomly choose an anchor node $v_1 \in \mathcal{G}$ along with its two other neighbors $v_2,v_3 \in \ngbr{v_1}$, and then alter their features such that $\mathcal{G}^*$ is contained in $\mathcal{G}$.

Similarly, graphs in partition $\data_0$ are generated by \textit{partially} planting $G^*$, that is by taking a strict subset $\tilde{\mathcal{C}} \subset \mathcal{C}'$ of size $1$ or $2$ and replacing node features of random nodes in $\mathcal{G}$  with $\tilde{\mathcal{C}}$. 
The graphs in $\dataP$ are generated by randomly choosing $|\tilde{\mathcal{C}}|$ number of mutually non-adjacent nodes, and sequentially assigning them node features from $\tilde{\mathcal{C}}$. 
Note that the chosen universe of $\data$ is small enough for complete enumeration and for exhaustive generation of $\data_1$, 
i.e., any graph that satisfies the said characteristic properties will be present in $\data_1$.
The datasets size is restricted by number of nodes in the grid-graphs that we sequentially choose as anchor nodes to plant $G^*$.
To remove any class label-imbalance, we maintain parity between the sizes of $\data_1$ and $\data_0$.
\\
\textbf{\textit{Labeling function}}: 
The graphs in the two partitions are assigned binary labels for the supervised training of the GNNs. Without loss of generality, $\data_1$ are assigned label $y=1$, and $\data_0$ graphs with partially  planted $\tilde{\mathcal{C}}$ are assigned label $y=0$.
Since $\dataP$ is never used for training GNN models, it is not assigned any label.
In case of 3-node $G^*$, after 1-step of propagation, the central node's representation will have a positive component along v*; the proof of its existence and uniqueness follows.

\subsection{Model Baselines}\label{sec:baselines}
We study the behaviors of two prominent models, the graph convolution network (GCN) \citep{DBLP:conf/iclr/KipfW17} and the graph attention network (GAT) \citep{DBLP:conf/iclr/VelickovicCCRLB18,DBLP:conf/iclr/Brody0Y22}.

Along with these message-passing layers, we use common global pooling functions, encompassing summation, averaging, and attention-based aggregation.
For any graph $G=(X,A)$, GCN updates are given by $f_{\mathrm{GCN}}(X,A;W) = \hat{A}XW$,
where $\hat{A}$ is the degree normalized adjacency matrix. 
After one step of propagation, the updated representation of node $i$ can be given by $v_i = W\sum_{j\in \ngbr{i}} x_j$. 
Further, all the nodes of the graph $\{v_1, v_2 \hdots v_N \}$ are pooled using average, maximum or attention mechanism. 
For $(A,X) \in \real^{N\times N} \times \real^{N\times k}$, the end-to-end equation for graph classification using GCN and average global-pooling can be written as
\begin{equation} 
f_{\mathrm{GCN+AVG.}}(X,A;~\Theta,\wgt) = \mathbf{1}\tp \cdot \sigma(\hat{A}X\Theta) \cdot \wgt   \label{eqn:GCNupdate}
\end{equation}
where $\Theta \in \real^{k\times d}$ is the linear-transform-parameter of GCN, $\mathbf{1}$ is column of all $1$'s, and $\wgt \in \real^d$ is the linear classifier. 
Similarly, the end-to-end model equation for GCN with global attention pooling can be written as
\begin{equation}
f_{\mathrm{A}}(X,A;~\Theta,\attn,\wgt)= \mathrm{sftmx}\left[\big<\sigma(\hat{A}X\Theta),\attn\big>\right] \sigma(\hat{A}X\Theta) \cdot \wgt    \label{eqn:GCN_ATTNupdate}
\end{equation}
where $\attn \in \real^d$ parameter is used to calculate the attention scores by taking \textit{softmax} over the projection of nodes. 
Attention based global pooling is a special case of the self-attention pooling mechanism \citep{DBLP:conf/icml/LeeLK19}, where the representation of the entire graph is collapsed into one single vector.

\subsection{Empirical Evidence for Inductive Bias}
We empirically show that there exists a clear bias in GNNs, while classifying graphs from the above data partitions.
We start with the most popular GNN architectures, graph convolution networks (GCN) and graph attention networks (GAT), along with most common global pooling methods like average, maximum and attention based readout function. 
We use the dataset described in previous Section \ref{sec:OvC}, $12\times 12$ grid graphs, partitioned into $\data_1, \data_0, \dataP$. 
All models, based on GCN/GAT layers coupled with MAX/AVG/ATTN global pooling layers, are trained to near perfect training accuracy (greater than $99\%$). 
While training, only $\data_1$ and $\data_0$ are exposed to the models and $\dataP$ remains inaccessible. 
We then test each of these \textit{trained} models on test data $\dataP$ to analyze what labels $\hat{y}\in \{0,1\}$ are assigned to graphs $G\in \dataP$. \\
\textbf{Inductive bias from observations in Table \ref{tab:rep_empirical}:} 
We observe that both GCN and GAT, with average pooling classify almost all unseen graphs as label $\hat{y}=1$, while attention based pooling classify almost all unseen graphs as $\hat{y}=0$. 
Assigning such skewed proportions of nodes with selective label $\hat{y}=1$ or $\hat{y}=0$, is a clear indicator that both models have preference to learn different functions. 
While this distinctive behavior shows an interesting biased behavior, we will see in later Section \ref{sec:setting} that this behavior translates to model's preference of subgraph selection.

\section{Analyzing Attention Based Global Pooling}

We analyze the behavior of GCN based models with attention pooling, and characterize it for grid-graph dataset.
We start with formalizing notations for the three data partitions $\data_1, \data_0$ and $\dataP$ below, along with precise definitions for \textit{occurrence} and \textit{connected-ness}.

\begin{definition}[Data Partitions]
A graph dataset $\data$ can be partitioned into three mutually exclusive sets on the basis of subgraph $G^* = (X^*, A^*)$ as follows:
\begin{itemize}
  \item $\data_1$ is the set of graphs $\{G \in \data ~\vert~ \exists \struct = (s_1, s_2 \ldots s_M) \in [N]^M\}$, such that $X_{[\struct,:]} = X^*$ and $A_{[\struct,\struct]} \geq A^*$ for some ordered tuple $\struct$ (here matrix comparison is element-wise)
  \item $\data_0$  is the set of graphs $\{G \in \data ~\vert~ X_{\struct,:} \neq X^*, ~\forall \struct \in [N]^M\}$.
  \item $\data_\perp$  is the set of graphs $\{G \in \data ~\vert~  ~\exists \struct\in [N]^M ~\text{s.t}~ X_{\struct,:} = X^* ~\text{and}~ G \notin D_1\}$
\end{itemize}
\end{definition}

\noindent
In other words, $\data_1$ contains $G^*$ as its subgraph, $\data_0$ subgraphs do not contain at least one node from $G^*$, and lastly $\dataP$ subgraphs have all the nodes of $G^*$, but does not contain $G^*$ subgraph.

\begin{definition}[Occurrence and Connection]
A subgraph $G^*=(X^*, A^*)$ is said to \textit{occur} in $G=(X,A)$ iff there exists an ordered tuple $\struct = (s_1, s_2 \ldots s_M) \in [N]^M$, such that $X_{[\struct,:]} = X^*$.
Also, $G$  with $G^*$ is said to be connected iff $A_{[\struct,\struct]} \geq A^*$ for some tuple $\struct$.
\end{definition}

\noindent
\textbf{Graph Convolutional Networks (GCN):}
Our theoretical analysis is restricted to a simpler setting of GCN after one step of message-propagation. GCN model updates the node-representations as 
$ x_i = \frac{1}{\sqrt{d_i d_j}} W \sum_{j \in \ngbr{i} \cup i} x_j$,
where $W$ is its learnable parameter and $d_id_j$ is a degree based normalizing coefficient. 
The summation of neighboring nodes after one-hop message-passing can be denoted by matrix product $V = AX$. Ignoring the degree normalization,  here each row  $v_i = V_{i,:}$ is the updated feature vector of node $i$.

\noindent
\textbf{GCN with Global Attention Pooling (ATTN): }
While the graph convolution operator does the job of aggregating immediate neighbors, the global pooling functions, also known as READOUT functions, aggregate all node's representation to get a single vector for the entire graph. 
We analyze the behavior of attention based global pooling, which  is calculated by the weighted summation over nodes as  
$h_G = \sum_{i=1}^N \alpha_i w \big( \sum_{j \in \ngbr{i}} x_j \big) =  \sum_{i=1}^N \alpha_i v_i = \hat{v}$, 
with two learnable parameters: $\attn \in \real^d$  and  $\wgt \in \real^d$.

\noindent
The following lemma shows that for each graph $G$, there exists a node $s$ (with its node  representation $v^*$) that contains aggregated information from all the nodes of subgraph $G^*$ after one-step of GCN aggregation. 

\begin{lemma}[Existence of $v^*$] \label{lem:exist}
There exists $v^* \in \real^d$, such that for any graph $G = (X,A) \in \data_1$,  there is an ordered tuple $\struct \in [N]^M$ satisfying \mbox{$\big<(A\cdot X)_{[\struct,:~]}, v^* \big> > \mathbf{0}$}.
Here, the matrix-subscript $(A\cdot X)_{[\struct,:~]}$ denotes selecting the rows indexed by elements of the set $\struct$.
\end{lemma}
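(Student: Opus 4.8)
The plan is to exhibit a single vector $v^*$ built only from the subgraph template $G^*$ (and hence independent of the particular graph $G$), and then verify the claimed positivity for every $G \in \data_1$. Writing $x^*_1, \dots, x^*_M \in \real^d$ for the feature rows of $X^*$, I would take
$$ v^* \;=\; \sum_{k=1}^{M} x^*_k , $$
i.e. the sum of the node features of $G^*$; in the one-hot setting of Section~\ref{sec:OvC} this is simply the indicator vector of the colour set of $G^*$ (e.g. $e_O + e_B + e_G$ for the O-B-G chain). Because this $v^*$ does not depend on $G$, it discharges the ``there exists $v^*$ \dots\ for any $G$'' quantifier structure in one stroke; what remains is a purely per-graph estimate.

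Next, fix an arbitrary $G = (X,A) \in \data_1$. By definition of $\data_1$ there is an ordered tuple $\struct = (s_1,\dots,s_M)$ with $X_{[\struct,:]} = X^*$ and $A_{[\struct,\struct]} \ge A^*$ element-wise. I read $\langle (A\cdot X)_{[\struct,:]}, v^*\rangle$ as the matrix--vector product $(AX)_{[\struct,:]}\,v^* \in \real^M$, whose $k$-th entry is $\langle v_{s_k}, v^*\rangle$ with $v_{s_k} = (AX)_{s_k,:} = \sum_{j \in \ngbr{s_k}} x_j$. The key step is a coordinate-wise lower bound on this aggregated feature. Since $A_{[\struct,\struct]} \ge A^*$, every $\ell$ with $A^*_{k,\ell}=1$ satisfies $s_\ell \in \ngbr{s_k}$, and since $X_{[\struct,:]}=X^*$ we have $x_{s_\ell} = x^*_\ell$; using that all node features are non-negative (so the remaining neighbours of $s_k$ only add non-negative mass) this gives, entrywise, $v_{s_k} \ge \sum_{\ell : A^*_{k,\ell}=1} x^*_\ell$.

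Then I would take the inner product with $v^*$ and exploit that the one-hot features are orthonormal and pairwise distinct, so $\langle x^*_\ell, v^*\rangle = \langle x^*_\ell, \sum_m x^*_m\rangle = 1$. Combining with the previous bound yields $\langle v_{s_k}, v^*\rangle \ge \sum_{\ell : A^*_{k,\ell}=1} 1 = \deg_{G^*}(k) \ge 1 > 0$ for each $k$, where the final inequality uses that $G^*$ (the O-B-G chain, and more generally any $G^*$ with no isolated node) gives every subgraph node at least one within-subgraph neighbour. Hence $(AX)_{[\struct,:]}\,v^* > \mathbf{0}$ coordinate-wise; since $v^*$ was fixed beforehand and $G$ was arbitrary, the lemma follows.

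The main obstacle is guaranteeing strict, rather than merely non-negative, positivity. This is exactly where the two structural ingredients enter: the connectedness encoded by $A_{[\struct,\struct]} \ge A^*$ ensures each node of $G^*$ aggregates at least one genuine $G^*$-neighbour, while the non-negativity and distinctness of the one-hot features ensure that the many irrelevant grid neighbours of $s_k$ can never cancel this signal (they can only increase $\langle v_{s_k}, v^*\rangle$). I would remark that if the convolution uses self-loops ($\tilde A = A + I$, as in the standard GCN normalisation) then $x^*_k$ itself appears in $v_{s_k}$ and strict positivity holds with no connectedness hypothesis at all; and that $v^*$ coincides with the post-aggregation representation of the node adjacent to all of $G^*$ (the central node in the O-B-G chain), matching the ``node $s$'' described informally before the statement.
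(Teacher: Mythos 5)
Your proposal is correct and follows essentially the same route as the paper, which simply sets $v^*$ to be the sum of the node features of $G^*$ (the subgraph-portion of the central node's one-hop aggregation) and observes that the claim then holds by construction for every $G \in \data_1$. Your write-up additionally makes explicit the ingredients the paper leaves implicit --- non-negativity and orthonormality of the one-hot features, the element-wise bound $A_{[\struct,\struct]} \geq A^*$, and the absence of isolated nodes in $G^*$ --- which is exactly what is needed to turn the paper's one-line justification into a complete argument.
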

Since $G^*$ occurs in each graph of $\data_1$, we can set $v^* = \sum_{j\in G^*} v_j$, and the lemma holds naturally by construction.
We further note that there \textit{cannot} exist any other node, apart from $s$, that contains aggregated information from all the nodes of subgraph $G^*$, after one-step of GCN aggregation.

\begin{lemma}[Uniqueness of $v^*$] \label{lem:unique}
For all $w \in \real^d$ perpendicular to $v^*$, 
$w\cdot v^* = 0$, 
there exists a graph $G = (X,A) \in \data_1$ such that $\forall \struct \in [N]^M$, we have $\langle (A \cdot X)_{[\struct,:~]}, w \rangle \leq 0$ .
\end{lemma}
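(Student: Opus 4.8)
The plan is to first collapse the universal quantifier over tuples into a per-node condition, and then exhibit a single adversarial colouring of the grid. Writing $v_i = (A\cdot X)_{[i,:]} = \sum_{j\in\ngbr{i}} x_j$ for the one-hop aggregate of node $i$, observe that $[N]^M$ contains every constant tuple $(i,i,\dots,i)$; hence the requirement ``$\langle (A\cdot X)_{[\struct,:]}, w\rangle \le 0$ for all $\struct$'' is equivalent (under either the componentwise or the summed reading) to the single statement $\langle v_i, w\rangle \le 0$ for every node $i\in[N]$. So it suffices to construct one graph $G\in\data_1$ in which no node's aggregated feature has positive correlation with $w$.

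For the construction, recall $v^* = \sum_{j=1}^M x^*_j = \sum_{c\in\mathcal{C}'} e_c$, the indicator of the colours appearing in $G^*$. Since the node features are one-hot, $\langle v_i, w\rangle = \sum_c n_c(i)\, w_c$, where $w_c = \langle e_c, w\rangle$ and $n_c(i)$ counts the neighbours of $i$ coloured $c$. The hypothesis $w\cdot v^* = 0$ reads $\sum_{c\in\mathcal{C}'} w_c = 0$, so at least one $G^*$-colour has $w_c\le 0$; let $c^- = \arg\min_c w_c$, and note $w_{c^-}\le 0$. I would then build $G$ by embedding $G^*$ as a straight chain deep in the interior of the grid---far enough from the boundary that every node of $G^*$ and every neighbour of such a node has degree $4$---and colouring all remaining nodes $c^-$. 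This $G$ lies in $\data_1$ by construction, since the planted tuple witnesses occurrence and connectedness regardless of the background.

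To verify $\langle v_i,w\rangle\le 0$ at every node, I would split into cases by how many $G^*$-nodes $i$ is adjacent to. A node with only background neighbours contributes $\deg(i)\,w_{c^-}\le 0$. A node adjacent to exactly one $G^*$-node of colour $c^*$ (and otherwise background) contributes $w_{c^*}+(\deg(i)-1)\,w_{c^-}$; if $w_{c^*}\le 0$ this is non-positive termwise, while if $w_{c^*}>0$ the constraint $\sum_{c\in\mathcal{C}'}w_c=0$ forces the remaining two $G^*$-colours to sum to $-w_{c^*}$, so $w_{c^-}\le -w_{c^*}/2$, and with $\deg(i)=4$ the background swamps the positive term: $w_{c^*}+3w_{c^-}\le w_{c^*}-\tfrac32 w_{c^*}<0$. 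The only node adjacent to two $G^*$-nodes is the chain centre, contributing $-w_{c_{\text{mid}}}+2w_{c^-}$ (the two endpoint colours sum to minus the centre colour), which is likewise $\le 0$ since $|w_{c^-}|\ge|w_{c_{\text{mid}}}|$. The straight embedding guarantees no background node sees two $G^*$-nodes, so these cases are exhaustive; the degenerate case $w=0$ is immediate.

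The crux, and the step I expect to be the main obstacle, is precisely this domination argument: a uniform $c^-$ background does \emph{not} suffice if $G^*$ is placed near the boundary, since a degree-$2$ corner adjacent to a positively-weighted $G^*$-node can have $\langle v_i,w\rangle>0$. The two ingredients that rescue it are (i) the orthogonality constraint $\sum_{c\in\mathcal{C}'} w_c=0$, which guarantees $|w_{c^-}|$ is at least half of any positive $G^*$-weight, and (ii) interior placement, which gives every relevant node degree $4$ so that three background neighbours outweigh a single positive contribution. I would state the size assumption explicitly (the $12\times12$ grid used throughout leaves ample room) and remark that for a general $M$-node pattern the same argument needs degree at least $M$, which the grid still satisfies for the three-node chain.
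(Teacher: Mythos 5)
Your proof is correct in substance and takes a genuinely different route from the paper's. The paper gives no construction at all: its entire justification for the uniqueness lemma is the informal discussion following the statement --- $G^*$ is the largest common subgraph of $\data_1$ (by the completeness assumption), any other pattern whose aggregate aligns with $w$ also appears in some graph of $\data_0$, hence $v^*$ has no component along $w$. That is an appeal to exhaustiveness of the partition, with no witness graph, no treatment of background colours, and no handling of the tuple quantifier; it is a sketch rather than a proof. You instead (i) collapse the quantifier over $\struct \in [N]^M$ to a per-node condition via constant tuples --- a step the paper never makes explicit but which is needed even to parse the matrix inner product --- and (ii) exhibit a concrete adversarial graph: the chain embedded straight in the grid interior, uniform background of the minimising colour $c^-$, and the domination inequality $w_{c^-} \le -w_{c^*}/2$ extracted from $\sum_{c \in \mathcal{C}'} w_c = 0$. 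What your route buys: an actual verification, including the boundary-degree subtlety, and it surfaces two hypotheses the paper leaves implicit --- that a concrete reading of $v^*$ as $\sum_{c\in\mathcal{C}'} e_c$ must be fixed (the paper's own definition of $v^*$ is graph-dependent and ambiguous), and that background nodes may be coloured with $G^*$ colours ($\mathcal{C}' \subseteq \mathcal{C}$); if the background palette were disjoint from $\mathcal{C}'$ and $w$ were positive on it, the lemma would be false, so this is a real condition, not pedantry. What the paper's route buys: it is agnostic to grid geometry and to the shape of $G^*$, whereas your construction is tailored to a short chain in a grid (though you note the extension). One small repair: your justification of the centre-node case via $|w_{c^-}| \ge |w_{c_{\mathrm{mid}}}|$ is not always true (take $w_{c_{\mathrm{mid}}} = 2$ and the two endpoint weights equal to $-1$); the conclusion $-w_{c_{\mathrm{mid}}} + 2w_{c^-} \le 0$ nevertheless holds in all cases, since for $w_{c_{\mathrm{mid}}} \ge 0$ both terms are non-positive, and for $w_{c_{\mathrm{mid}}} < 0$ the minimality of $c^-$ gives $2w_{c^-} \le 2w_{c_{\mathrm{mid}}} \le w_{c_{\mathrm{mid}}}$.
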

\noindent
The above lemma highlights that $G^*$ is the largest common-subgraph in $\data_1$, because all the smaller subgraphs of $G^*$ are also present in $G \in \data_1$ by the virtue of presence of $G^*$.
Any other subgraph, apart from $G^*$, will find its aggregated representation $w$ also in some $G \in \data_0$ graph.
$G^*$ being the largest common subgraph in $\data_1$, $v^*$ will not have any component along $w$.

For $G\in\data_1$, 
let the neighborhood of node $s$ comprise of nodes of $G^*$ in the set $\struct$ and other additional nodes $\{\ngbr{s}/\struct \}$. 
The aggregated representation of $s$ can be split into those coming from neighbors in $\struct$ and its complement set $\{\ngbr{s}/\struct \}$.
After attention based global pooling (readout), a graph $G \in \data_1$ can be represented as 
\begin{equation} \label{eqn:vhat}
h_G = \sum_{i\neq s, ~i=1}^N \alpha_i v_i + \alpha_s v_s = \sum_{i\neq s, ~i=1}^N \alpha_i v_i + {\alpha_s w_s} + {\alpha_s v^*}  
\end{equation}
Only the $\alpha_s v^*$ term is independent of the graph, as it is present in all graphs of $\data_1$, hence it is the only vector to have largest dot-product component along $v^*$.

\subsection{Loss Function and Gradient Flow of Parameters} \label{sec:setting}
The final GCN model's prediction $p\in [0,1]$ for the graph label is calculated as  \mbox{$p = \sigma ( \wgt\tp \aggr )$}, where 
$\sigma(x)$ is the sigmoid activation function. 
We characterize the evolution of the model parameters and their behavior by calculating their gradient flow \citep{DBLP:conf/nips/ChizatB18}.
For two class graph-classification, the model employs the binary cross entropy loss function \mbox{$\loss$: $- \big[ y \log(p) + (1-y)\log(1-p) \big]$}, where $y\in \{0,1\}$ is the true label and $p \in [0,1]$ is the model's predicted score.
The binary cross entropy loss function is given by
\begin{eqnarray*}
\loss &=& - \Big[ y \log(p) + (1-y)\log(1-p) \Big]  \\
&=&  \log\big(1+ \exp(\wgt\tp \aggr)\big) -y  \wgt\tp \aggr 
\end{eqnarray*}

\noindent
\textbf{Assumptions: }
In our theoretical analysis, we consider $G^*$ as a subgraph chain of radius one and derive results for a single-layer propagation of GCN, with attention based global pooling. 
During the course of optimization, we assume that the two partitions, $\data_1$ and $\data_0$, are of approximately similar size and we are able  to maintain mutual label parity to avoid the class-imbalance problem. 
For ease of notation, we assume that the grid-graphs have orthogonal node features $x_i$, which get mixed after a single-step of message propagation and give $v_i$. 
Due to the symmetry of grid-graphs, we upper bound dot-product of nodes $v_j\tp v_i$ by $\theta$. 
For bounded max-degree of the graphs, we represent $v_i\tp v_i = \theta_d$. By orthogonality among initial $x_i$, the relation $\theta_d \geq \theta$ follows.
By construction, partitions $\data_1, \data_0$ are exhaustive datasets that contain all the grid-graphs satisfying the requisite property. 

Over the course of optimization on the training data $\{\data_1 \cup \data_0\}$, we will next show that the weight parameter $\wgt$, as well as the attention parameter $a$ are most closely aligned along the direction of $v^*$. 
Proofs for the following results are deferred to the \textit{Appendix} \cite{pk2024gnnsoptimisation}.  

\begin{lemma}[Orientation of $\wgt$ parameter] \label{lem:vsdw}
There exists a vector $v^* \in \real^d$, such that the component of $\DD{\wgt}{t}$ during optimization is always aligned along $v^*$ and  
the weight parameter $\wgt$ moves in the direction of ${v^*}$,
$${v^*}\tp \DD{\wgt}{t} > 0 $$ 
\end{lemma}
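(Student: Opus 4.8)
The plan is to write down the gradient-flow dynamics of the classifier $\wgt$, project them onto $v^*$, and show the projection is strictly positive for every $t$. First I would differentiate the loss: for a single graph $G$ with label $y_G$ and predicted score $p_G = \sigma(\wgt\tp \aggr_G)$, one has $\dd{\loss}{\wgt} = (p_G - y_G)\,\aggr_G$, so that under gradient flow $\DD{\wgt}{t} = -\sum_G (p_G - y_G)\,\aggr_G$. Splitting the training set $\{\data_1 \cup \data_0\}$ by label ($y=1$ on $\data_1$, $y=0$ on $\data_0$) gives
\begin{equation*}
\DD{\wgt}{t} = \sum_{G\in\data_1}(1-p_G)\,\aggr_G \;-\; \sum_{G\in\data_0}p_G\,\aggr_G .
\end{equation*}
Taking the inner product with $v^*$ then reduces the claim $\vstarv{\DD{\wgt}{t}} > 0$ to controlling the scalars $\vstarv{\aggr_G}$ on the two partitions, with the coefficients $(1-p_G), p_G \in (0,1)$.

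Next I would lower-bound $\vstarv{\aggr_G}$ on $\data_1$ and upper-bound it on $\data_0$. For $G\in\data_1$ I would invoke the decomposition in Equation \eqref{eqn:vhat}, $\aggr_G = \sum_{i\neq s}\alpha_i v_i + \alpha_s w_s + \alpha_s v^*$, together with Lemma \ref{lem:exist}: the special node $s$ carries the full signal $v^*$, so the term $\alpha_s \vstarv{v^*} = \alpha_s\|v^*\|^2$ is strictly positive, while the remaining cross terms $\vstarv{v_i}$ are each bounded by the off-diagonal overlap $\theta$. For $G\in\data_0$, since $G^*$ does not occur, no node's aggregate contains $v^*$; by the uniqueness statement of Lemma \ref{lem:unique} the direction $v^*$ carries no component along any substructure realizable in $\data_0$, so $\vstarv{\aggr_G}$ collects only off-diagonal overlaps. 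Using the assumed inequality $\theta_d \geq \theta$ (diagonal dominates off-diagonal) and the exhaustiveness of the partitions, I would extract a uniform gap: $\vstarv{\aggr_G} \geq A > 0$ for $G\in\data_1$ and $\vstarv{\aggr_G} \leq 0$ for $G\in\data_0$.

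Granting this gap, the sign follows without any control on the individual $p_G$: the first sum contributes $(1-p_G)\vstarv{\aggr_G} > 0$ term by term, and each summand of the second sum is $-p_G\vstarv{\aggr_G} \geq 0$ because $p_G > 0$ and $\vstarv{\aggr_G}\leq 0$; since $\data_1$ is non-empty and $p_G < 1$, the total is strictly positive for all $t$, giving $\vstarv{\DD{\wgt}{t}} > 0$.

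The main obstacle is precisely establishing the $\data_0$ bound $\vstarv{\aggr_G}\leq 0$ uniformly (and the strict positivity on $\data_1$) independently of the evolving attention weights $\alpha_i$ and scores $p_G$. Because $(1-p_G)$ can decay toward $0$ during training while the $\data_0$ terms persist, one cannot merely argue that $\data_1$ graphs are ``more aligned on average''; the argument must decouple the $v^*$-alignment of each partition from the instantaneous predictions. This is where Lemma \ref{lem:unique} and the orthogonality/degree assumptions ($\theta_d\geq\theta$, and $\data_0$ exhaustively containing every proper sub-structure of $G^*$) do the real work, by certifying that the graph-independent signal $\alpha_s v^*$ lives only in $\data_1$ and that $\data_0$ aggregates have no positive $v^*$-component to exploit.
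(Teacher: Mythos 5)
Your gradient computation and the reduction of the claim to controlling the sign of $\vstarv{\aggr}$ on the two partitions match the paper's setup, but the pivotal step --- the ``uniform gap'' asserting $\vstarv{\aggr} \leq 0$ for every $G\in\data_0$ --- is false in this setting, and neither Lemma \ref{lem:unique} nor the orthogonality assumptions can deliver it. The node features are one-hot and hence non-negative, and every graph in $\data_0$ contains a \emph{proper subset} of the nodes of $G^*$ (partial occurrence of size $1$ or $2$), so its aggregated representation has strictly positive overlap with $v^* = \sum_{j\in G^*} v_j$. The paper's own auxiliary relations record exactly this: for a $y=0$ graph, ${v^*}\tp \aggr = \sum_i \beta_i\, {v^*}\tp v_i = \theta > 0$. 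Lemma \ref{lem:unique} cannot rescue your bound, because it is a statement about directions $w$ \emph{perpendicular} to $v^*$ and about the existence of certain graphs in $\data_1$; it says nothing about the $v^*$-component of $\data_0$ aggregates being non-positive.

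Consequently your claim that ``the sign follows without any control on the individual $p_G$'' also collapses --- controlling $p$ is precisely what the paper's proof needs. The paper pairs a $\data_1$ graph with a $\data_0$ graph (using the completeness and label-parity assumptions, under which the graph-dependent noise term $w_s$ of Equation \eqref{eqn:vhat} appears under both labels and cancels in the update), bounds all cross-overlaps by $\theta$ and the diagonal term by $\theta_d \geq \theta$, and obtains ${v^*}\tp \DD{\wgt}{t} = (1-p)\big[\alpha_s\theta_d + \sum_{i\neq s}\alpha_i\theta\big] - p\,\theta \geq (1-2p)\,\theta$, using $\sum_i \alpha_i = 1$. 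Strict positivity then rests on the explicit assumption $p < 1/2$ (a well-trained, low-loss model), which exists precisely to dominate the positive $\data_0$ contribution that your argument assumed away. Your overall architecture is the right one, but as written the proof has a genuine gap: the $\data_0$ bound must be weakened to $\vstarv{\aggr} \leq \theta$, and the $p<1/2$ hypothesis must be reinstated to close the argument.
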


\noindent
We further characterize the gradient flow of attention parameter $\attn$, and show that $\attn$ moves most along the direction $v^*$ after $O(D)$ update iterations, where $D$ is an upper bound on the maximum degree of graphs.
\begin{lemma}[Orientation of $\attn$ parameter]
There exists a vector $v^* \in \real^d$, such that the component of $\DD{\attn}{t}$ during optimization is always positive along $v^*$. 
After sufficient optimizer updates, the attention parameter $\attn$ moves most along the direction of ${v^*}$
$${v^*}\tp \DD{\attn}{t} > 0 $$     \label{lem:vsda}
\end{lemma}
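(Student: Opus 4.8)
The plan is to write the gradient flow as $\DD{\attn}{t} = -\dd{\loss}{\attn}$ (summed over the training set $\{\dataI \cup \dataO\}$) and project it onto $v^*$. Setting the logit $s = \wgt\tp \aggr$, the cross-entropy gives $\dd{\loss}{s} = p - y$, so by the chain rule the only non-trivial factor is $\dd{\aggr}{\attn}$. Since $\aggr = \sum_i \alpha_i v_i$ with $\alpha_i \propto \exp(\langle v_i, \attn\rangle)$, the standard softmax Jacobian $\dd{\alpha_i}{\attn} = \alpha_i (v_i - \aggr)$ yields
\begin{equation*}
\dd{\aggr}{\attn} = \sum_i \alpha_i (v_i - \aggr)(v_i - \aggr)\tp =: \Sigma_\alpha,
\end{equation*}
the attention-weighted covariance of the node representations. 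Combining the factors and summing over graphs gives the expression I actually need to sign,
\begin{equation*}
\vstarv{\DD{\attn}{t}} = \sum_{G} (y_G - p_G)\, \vstarv{\Sigma_{\alpha,G}}\, \wgt .
\end{equation*}

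Next I would substitute the conclusion of Lemma~\ref{lem:vsdw}, that $\wgt$ is aligned with $v^*$, writing $\wgt = c\,v^* + w_\perp$ with $c>0$ dominant and $w_\perp \perp v^*$. Then $\vstarv{\Sigma_\alpha}\wgt = c \sum_i \alpha_i \langle v^*, v_i - \aggr\rangle^2$ up to a subdominant cross term, so the leading part is $c$ times the attention-weighted \emph{variance} of the scalar projections $\langle v^*, v_i\rangle$, which is non-negative. The heart of the argument is a per-partition comparison of these variances. For $G \in \dataI$ (so $y_G - p_G = 1 - p_G > 0$), Lemma~\ref{lem:exist} supplies the node $s$ whose aggregate decomposes as $v_s = w_s + v^*$ (Eq.~\ref{eqn:vhat}); its projection $\langle v^*, v_s\rangle \approx \|v^*\|^2$ is large while every other node's projection is bounded by the mixing constant $\theta$ (using Lemma~\ref{lem:unique}), so the projections are genuinely spread out and the weighted variance is bounded below by a positive constant. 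For $G \in \dataO$ (so $y_G - p_G = -p_G < 0$) at least one node of $G^*$ is absent, so no node realizes the full $v^*$ component; all projections $\langle v^*, v_i\rangle$ are uniformly small (controlled by $\theta,\theta_d$) and their weighted variance is correspondingly small. Under the label-parity assumption the positive $\dataI$ terms then dominate the small negative $\dataO$ terms, giving $\vstarv{\DD{\attn}{t}} > 0$.

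For the sharper claim that $\attn$ moves \emph{most} along $v^*$, I would mirror the $\wgt$ argument: for any $w \perp v^*$, Lemma~\ref{lem:unique} guarantees that the corresponding aggregated pattern also appears in some $\dataO$ graph, so the $\dataI$ and $\dataO$ variance contributions along $w$ balance and leave no net drift, whereas $v^*$ — being the unique largest common-subgraph representation across $\dataI$ — receives an uncancelled push. The $O(D)$ timescale enters precisely here: the degree bound $D$ limits how many non-$G^*$ neighbors feed into $w_s$ and into the other nodes' aggregates, so after $O(D)$ updates the attention has concentrated enough mass on node $s$ (appreciable $\alpha_s$) for the $\dataI$--$\dataO$ variance gap to be fully realized.

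The main obstacle will be making this two-partition variance comparison rigorous, because $\Sigma_\alpha$ depends on the attention weights $\alpha$, which are themselves the evolving quantity — so $\attn$'s dynamics are coupled both to its own current value and to the alignment of $\wgt$. Handling this cleanly requires a two-timescale/ordering argument in which $\wgt$ aligns first via Lemma~\ref{lem:vsdw}, together with uniform upper bounds (through $\theta$, $\theta_d$, and $D$) on the $\dataO$ projection variance so that it provably cannot overwhelm the $\dataI$ signal at any point along the flow.
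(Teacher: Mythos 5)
Your proposal is correct and takes essentially the same route as the paper's proof: the same softmax-Jacobian gradient-flow computation (your $\Sigma_\alpha = \sum_i \alpha_i (v_i-\aggr)(v_i-\aggr)\tp$ is exactly the paper's $\sum_i \alpha_i (v_i - \aggr)\,v_i\tp$, since $\sum_i \alpha_i (v_i-\aggr)=0$), the same projection onto $v^*$ with the per-partition split driven by the special node $s$, and the same two-timescale reliance on Lemma~\ref{lem:vsdw} so that $\psi_s > \psi$ (your $c>0$) and the ``moves most'' condition holds after sufficient iterations. The only notable difference is that you treat the $\dataO$ contribution as a ``small negative'' term to be dominated via label parity and uniform bounds, whereas under the paper's homogeneity assumption (${v^*}\tp v_i = \theta$ for every node of a $\dataO$ graph, hence ${v^*}\tp \aggr = \theta$) that contribution vanishes identically, so no domination argument is needed.
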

%
%
%


\noindent
Based on the above lemmas, we can now state our main result  regarding the biased behavior of graph convolution networks with attention based global pooling. 
\begin{theorem}[Implicit Bias of Global Attention Pooling Networks]
\label{thm:main}
Consider the setting and assumptions of Section \ref{sec:setting},
and a 1-layer GCN coupled with attention based global pooling trained on $\data_1$ and $\data_0$. 
The parameters $\{\attn, \wgt\}$ at time $t$ iterations of gradient descent, satisfy 
$$ \big<\attn, v^* \big> > 0 ~~~~and~~~~  \big<\wgt, v^*\big> > 0 $$
for some $v^*$ per Lemma \ref{lem:exist} \mbox{$\big<(A\cdot X)_{[\struct,:~]}, v^* \big> > 0$}, $\forall (A,X)\in \data_1$.
\end{theorem}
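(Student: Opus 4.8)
The plan is to obtain Theorem~\ref{thm:main} as a direct consequence of the two gradient-flow lemmas by integrating the velocity inequalities along the optimization trajectory. First I would fix the direction $v^*$ supplied by Lemma~\ref{lem:exist}, namely $v^* = \sum_{j\in G^*} v_j$, which by construction satisfies $\big\langle (A\cdot X)_{[\struct,:]}, v^*\big\rangle > 0$ for every $G\in\dataI$. The target is then purely to show that both scalar projections $\langle \wgt(t), v^*\rangle$ and $\langle \attn(t), v^*\rangle$ remain strictly positive along the flow; no convergence or dominance statement is needed, only the sign.

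For the classifier weight, Lemma~\ref{lem:vsdw} already furnishes $\big\langle v^*, \DD{\wgt}{t}\big\rangle > 0$ at every time $t$. Starting from the standard zero (or $v^*$-nonnegative) initialization, the fundamental theorem of calculus gives
\[
\langle \wgt(t), v^*\rangle = \langle \wgt(0), v^*\rangle + \int_0^t \Big\langle v^*, \DD{\wgt}{s}\Big\rangle\, ds > 0,
\]
since the integrand is strictly positive throughout. The attention parameter is handled identically: Lemma~\ref{lem:vsda} states that $\big\langle v^*, \DD{\attn}{t}\big\rangle > 0$ holds \emph{unconditionally} during optimization (its extra ``moves most'' conclusion about $v^*$ becoming the dominant direction after $O(D)$ iterations is not required here), so the same integration yields $\langle \attn(t), v^*\rangle > 0$.

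To keep the argument self-contained I would briefly re-track why those velocity projections never change sign, although this is the content discharged inside the lemmas. Projecting the binary-cross-entropy gradient onto $v^*$ decomposes the velocity into a $\dataI$ contribution and a $\dataO$ contribution. For $G\in\dataI$ the readout $\aggr$ carries the graph-independent term $\alpha_s v^*$ isolated in Eq.~\eqref{eqn:vhat}, forcing $\langle \aggr, v^*\rangle > 0$; for $G\in\dataO$, Lemma~\ref{lem:unique} guarantees that no common substructure of $G^*$ survives, so the $v^*$-component of those readouts cannot accumulate. The positive $\dataI$ mass therefore outweighs the $\dataO$ mass along $v^*$ at all times, pinning the sign.

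The main obstacle I anticipate is the simultaneous coupling of the two parameters: $\alpha_i$ enters $\aggr$ and hence drives $\wgt$, while $\wgt$ enters the loss that drives $\attn$, so I must confirm that this cross-dependence cannot flip either projected velocity. Because Lemma~\ref{lem:vsda} asserts positivity of $\big\langle v^*, \DD{\attn}{t}\big\rangle$ for \emph{all} $t$ and not merely asymptotically, the transient regime before the $O(D)$-iteration dominance threshold poses no difficulty for the present claim; the only care needed is that the integrated inner product stays positive at every finite $t$, which follows from the pointwise strict inequalities above together with the assumed initialization.
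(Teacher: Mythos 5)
Your proposal is correct and takes essentially the same route as the paper's own proof sketch: fix $v^*$ from Lemma~\ref{lem:exist}, invoke Lemmas~\ref{lem:vsdw} and~\ref{lem:vsda} for the sign of the projected velocities of $\wgt$ and $\attn$, and conclude both parameters acquire a positive component along $v^*$. The only difference is that you make the integration along the trajectory and the zero-initialization bookkeeping explicit, which the paper's sketch leaves implicit.
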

\begin{hproof}
It follows from Lemma \ref{lem:vsdw} and \ref{lem:vsda} that parameters $\attn$ and $\wgt$ are poised to align with $v^*$ during optimization. 
Ensuring the existence of such $v^*$ using Lemma \ref{lem:exist} for training datasets $\data_1$ and $\data_0$ that are closed under completeness, it is easy to see that both the parameters will definitely have their component along $v^*$.
\end{hproof}

\emph{This is the main result of our work: it lays out an important property of GNN architectures that differentiate graphs based on the proximity of discriminative nodes found within the graph. }
Theorem \ref{thm:main} formalizes the biased nature of attention pooling and shows its preference to learn $v^*$, i.e. use  \textit{closely-connected-substructures} as discriminative features.
Consequently, \textit{average pooling}, which corresponds to uniform attention across all nodes, lacks the ability to selectively focus on specific nodes, thereby depending on aggregating nodes that may be distributed across the graph.

\section{Experiments}
\label{sec:exps}

To investigate the presence of inductive bias in diverse Graph Neural Network (GNN) architectures during gradient descent optimization, we pose the following pivotal question: 
Can two GNNs trained to achieve comparable accuracy on identical datasets ultimately learn characteristically different functions?
We show this distinction by examining their behaviors on  \textit{unseen data}, which the models have not come across while training. 
Focusing specifically on graph classification, we narrow our scope to ascertain if GNNs develop classification capabilities based on the presence/absence of a designated subgraph $G^*$. 
To address this, we create both synthetic and semi-synthetic datasets, by embedding $G^*$ within real-world datasets.

\subsection{Datasets and Model Setting}
\textbf{Synthetic:} 
The setting of grid-graphs, as seen in Section \ref{sec:OvC}, provides a structured framework for investigating implicit biases within a controlled domain. 
The options for substituting and embedding $G^*$ are inherently limited in this framework, due to the size of the grid that allows more control over its characteristics, including the balance between the positive and the negative graph instances. 
To elucidate the scenarios of extreme bias, it is imperative that the assumptions outlined in Section \ref{sec:setting} remain valid, a condition readily attainable through these grid-graph datasets. 

From Section \ref{sec:OvC} we know that the binary ground-truth labels for each partition are defined as $y_G = 1$, if $G\in\data_1$ and $y_G = 0$ if $G\in\data_0$.

\noindent
\textbf{Realworld (semi-synthetic):} 
Due to the absence of precise ground truth node-information pertaining to the subgraph $G^*$, we resort to augmenting some of the well-established graph datasets, such as ZINC \citep{DBLP:journals/corr/Gomez-Bombarelli16}, Tox21 
 from MoleculeNet \citep{DBLP:journals/corr/WuRFGGPLP17}, and PROTEINS from TU Dataset \citep{DBLP:journals/corr/abs-2007-08663}, with $G^*$ to create semi-synthetic datasets. 
This partial or complete planting of $G^*$ yields distinct partitions $\data_1$, $\data_0$, and $\dataP$. 
Our full-version  \cite{pk2024gnnsoptimisation} shows some of the samples of real-world-derived graphs underscoring their random structural compositions. 
Importantly, this diversity serves to mitigate any unintended influences stemming from the inherent symmetry in the grid graphs. 
Furthermore, these datasets will help identify the presence of implicit biases within GNNs when applied to real-world graph data.

\noindent
Planting $G^*$ in realworld data $\data$:~ 
We adopt a parallel approach for realworld datasets akin to our treatment of grid-graphs, and embed the $G^*$ subgraph either fully, partially, or sparsely. 
Details are deferred to the Appendix  \cite{pk2024gnnsoptimisation}.
Our code is available \cite{selfCode}.

\noindent
\textbf{Model baselines:}
In this study, we analyze GCN and GAT with standard global pooling (MAX,AVG,ATTN). 
Our focus lies in highlighting the phenomenon wherein GNNs that are trained on identical datasets, converge to different  functional representations. 
We substantiate this bias through concrete examples showcasing divergent behaviors, achieved with minimal adjustments made to the architecture or training paradigms.

\begin{table}
 \captionz{Real datasets used to create partitions $\data_1, \data_0, \dataP$ \label{tab:realworld-dataprop}}
    
    \renewcommand*{\arraystretch}{1.2}
    \centering
  \begin{tabular}{|l|c|c|c|c|}
  \hline
    Dataset &
      \begin{tabular}[c]{@{}c@{}} Average\\ [-0.2em] \#nodes\end{tabular} &
      \begin{tabular}[c]{@{}c@{}} Average\\ [-0.2em] \#edges\end{tabular} &
      \begin{tabular}[c]{@{}c@{}} Random \\ [-0.3em] colors\end{tabular} &
      \multicolumn{1}{c|}{\begin{tabular}[c]{@{}c@{}}\# Train \\ [-0.3em] graphs\end{tabular}}  \\    
  \hline
  \rule{0pt}{1.3em}
  \noindent
    ZINC \citep{DBLP:journals/corr/Gomez-Bombarelli16} & 23.2 & 49.8  & 24 & 10,000  \\
     Tox21 \citep{DBLP:journals/corr/WuRFGGPLP17}& 18.6 & 38.6  & 9  & 7,831  \\
     Proteins \citep{DBLP:journals/corr/abs-2007-08663}& 39.1 & 145.6 & 9  & 1,113    \\
    \hline
    \end{tabular}
\end{table}

\subsection{Performance on Real Datasets}\label{sec:readDatasets_perform}
To verify if the observations from Section \ref{sec:OvC} transcend to realworld scenarios, we 
perform experiments on four graph collection datasets, as described in Table \ref{tab:realworld-dataprop}. 
Appendix shows sample graphs used from these datasets while training \cite{pk2024gnnsoptimisation}. 
The skewed proportions of label assignment is also seen in real datasets. 
Table \ref{tab:all_results} shows GCN with ATTN pooling assigns majority of the labels as 1, thus classifies on the basis of connections.
This proportion is not as pronounced in AVG pooling because the classification is based on the presence of nodes as well. MAX pooling also has varied schemes of labeling among the three datasets. 
It is essential to acknowledge that these proportions are not as extreme as those observed in our toy dataset, owing to potential violations of assumptions such as data partition completeness. 
Nevertheless, the overarching trends persist, underscoring the robustness of our findings.

\begin{table}[th]
\captionz{ Real world dataset results, GCN 1 layer, trained to 100\% accuracy, averaged over multiple runs (standard deviation is order of magnitude smaller and so ignored)} 
   \label{tab:all_results}
   \renewcommand*{\arraystretch}{1.2}
   \centering
       \begin{tabular}{|l|ll|ll|ll|}
       \hline
       \multicolumn{1}{|c|}{\multirow{2}{*}{Dataset}} &
         \multicolumn{2}{c|}{MAX} &
         \multicolumn{2}{c|}{AVG} &
         \multicolumn{2}{c|}{ATTN} \\ \cline{2-7} 
       \multicolumn{1}{|c|}{} &
         \multicolumn{1}{c|}{$\hat{y}=1$} &
         \multicolumn{1}{c|}{$\hat{y}=0$} &
         \multicolumn{1}{c|}{$\hat{y}=1$} &
         \multicolumn{1}{c|}{$\hat{y}=0$} &
         \multicolumn{1}{c|}{$\hat{y}=1$} &
         \multicolumn{1}{c|}{$\hat{y}=0$} \\ \hline
       ZINC &
         \multicolumn{1}{l|}{0} &
         10000   &
         \multicolumn{1}{l|}{8561} &
         1439   &
         \multicolumn{1}{l|}{3} &
         9997    \\ \hline
       Protein &
         \multicolumn{1}{l|}{5531} &
         34 &
         \multicolumn{1}{l|}{1665} &
          3900 &
         \multicolumn{1}{l|}{6} &
          5559 \\ \hline
       Tox21 &
         \multicolumn{1}{l|}{0} &
          23493 &
         \multicolumn{1}{l|}{7576} &
         15917  &
         \multicolumn{1}{l|}{6} &
          23487 \\ \hline
       \end{tabular}
\end{table}

%

%

\subsection{Covering the Entire Attention-Spectrum}
\begin{table}[th]
\captionz{Classification behavior of 1-layer GCN and ATTN pooling, with varying temperature coefficient of softmax operator. All baselines are uniformly run for 100 epochs of optimization with learning rate of 0.001, test data is of size $10k$. \label{tbl:spectrum} }
 \renewcommand*{\arraystretch}{1.2}
    \centering 
\begin{tabular}{|c|c|c|c|}
\hline
$\beta$ Temperature & Train Accu. & $\hat{y}=0$ & $\hat{y}=1$ \\ \hline
1     & 0.9991      & 9806      & 194       \\ 
4     & 0.9982      & 9704      & 296       \\ 
10    & 0.993       & 9560      & 440       \\ 
300   & 1.0000      & \textbf{3}         & \textbf{9997}     \\ \hline
\end{tabular}
\end{table}
Attention pooling utilizes softmax to compute the weights for weighted summation. The softmax function can be modulated by a temperature coefficient, which governs the sharpness of the resulting distribution.
$softmax_\beta = \frac{\exp(\attn\tp v_i/\beta)}{\sum_j \exp(\attn\tp v_j/\beta)}$.
For $\beta$ set to 1, the original softmax function is recovered. However, for smaller values of $\beta$ approaching 0, the softmax tends towards behaving like a MAX pooling function. Conversely, as $\beta$ tends towards infinity, the softmax function becomes increasingly smoothed out and resembles an AVG pooling, akin to $1/N(\sum v_i)$. 
We investigate different values of $\beta$ to see the temperature point at which the inductive bias behavior switches. 
Table \ref{tbl:spectrum} shows that the flip in ATTN's behavior happens at $\beta=300$. Until then, ATTN pooling behaves much the same. Intuitively, this reinforces the separation between zero-attention (AVG) and non-zero attention (ATTN).

\subsection{Delineating the Effect of Linear Layer } \label{sec:linear_layer}

\begin{table}[]
\captionz{Effect of pruning out linear classifier and still observing consistent label assignment trends (Sec.\ref{sec:linear_layer}) \label{tbl:without_linear} }
 \renewcommand*{\arraystretch}{1.2}
    \centering 
\begin{tabular}{|c|rr|rr|}
\hline
\multirow{2}{*}{\begin{tabular}[c]{@{}c@{}}Pooling\\ Method\end{tabular}} &
  \multicolumn{2}{c|}{With Linear Layer} &
  \multicolumn{2}{c|}{Without Linear Layer} \\ \cline{2-5} 
 &
  \multicolumn{1}{c|}{$\hat{y}=0$} &
  \multicolumn{1}{c|}{$\hat{y}=1$} &
  \multicolumn{1}{c|}{$\hat{y}=0$} &
  \multicolumn{1}{c|}{$\hat{y}=1$} \\ \hline
MAX  & \multicolumn{1}{r|}{9988} & 12    & \multicolumn{1}{r|}{9585} & 415  \\ \hline
AVG  & \multicolumn{1}{r|}{0}    & 10000 & \multicolumn{1}{r|}{2248} & 7752 \\ \hline
ATTN & \multicolumn{1}{r|}{9940} & 60    & \multicolumn{1}{r|}{9465} & 535  \\ \hline
\end{tabular}
\end{table}

Equation.\ref{eqn:GCN_ATTNupdate} provides the formulation for end-to-end graph classification, employing attention-based global pooling within a single-layer GCN. 
It can appear that the biases discussed in preceding sections stem from the presence of the linear layer. 
While the GCN convolution layer facilitates message passing, the subsequent pooling layer condenses all nodes into a unified graph representation, followed by the linear layer assigning a logic for classification. 
To disentangle the influence of the linear layer from the GCN, we opt to prune out the linear classifier and allow the GCN to perform the necessary dimension reduction, resulting in $\Theta\in \real^{N\times 1}$. Subsequently, we look into the alterations in graph classification behavior resulting from the removal of the linear classifier.

Table \ref{tbl:without_linear} illustrates that the proportion of nodes classified as $\hat{Y}=0$ remains consistent irrespective of the presence or absence of the linear layer. 
Upon closer examination, it becomes evident that the absence of the linear layer leads to elevated training loss, potentially indicating that the dimension of parameters $\Theta \in \mathbb{R}^{N\times 1}$ might be insufficient to effectively capture meaningful graph representations.
Nonetheless, the initial bias persists in the distribution of nodes across each label. 
This observation underscores the intrinsic bias inherent in the layers of GNNs, even after pruning the linear classifier.

\subsection{Effect of Hierarchical Pooling}\label{sec:heirarchical_pooling}

\begin{table}[ht]
\captionz{Classification behavior with hierarchical ASAP pooling.  \label{tbl:heirarchical} }
 \renewcommand*{\arraystretch}{1.2}
    \centering 
\begin{tabular}{|l|l|l|l|l|}
\hline
\begin{tabular}[c]{@{}l@{}}Hierarchical\\ Pooling\end{tabular} &
  \begin{tabular}[c]{@{}l@{}}Global\\ Pool\end{tabular} &
  \begin{tabular}[c]{@{}l@{}}Training\\ Accu.\end{tabular} &
  $\hat{y}=0$ &
  $\hat{y}=1$ \\ \hline
\multirow{3}{*}{\begin{tabular}[c]{@{}l@{}}GCN + ASAP \\ Pooling\end{tabular}} & ATTN & 0.9875 & 8736 & 1264 \\ \cline{2-5} 
                                                                                 & AVG  & 0.9825 & 8519 & 1481 \\ \cline{2-5} 
                                                                                 & MAX  & 0.9725 & 8491 & 1509 \\ \hline
\end{tabular}
\end{table}

Global pooling methods operate simultaneously on all the nodes within a graph, thereby failing to encode its structural nuances. 
While this aggregation process condenses all nodes into a singular representation within one single time-step, in contrast, 
Hierarchical pooling methods progressively coarsen the graph across multiple iterations, thereby preserving its hierarchical structure. 
We scrutinize the behaviors of select hierarchical pooling operators, such as ASAP \citep{DBLP:conf/aaai/RanjanST20}. 
Intuitively, these operators are expected to identify and prioritize significant subgraphs for classification purposes. 
Table \ref{tbl:heirarchical} 
 shows a striking resemblance in the behaviors of hierarchical pooling operators to that of attention-based global pooling. 
A plausible explanation for this similarity lies in their shared objective of discerning connected patterns to effectively aggregate graph information, thereby exhibiting analogous behaviors.

\subsection{Discussion}

Our work has practical implications in tasks where domain knowledge guides graph classification. 
It can help distinguish whether classification depends on coherent subgraph presence or merely node presence, regardless of neighbors. 
For instance, in chemistry, chemical reaction likelihood may hinge on specific node configurations, while compound toxicity could relate to certain heavy metal atoms. Thus, attention-based global pooling suits chemical reactions, while average pooling may be apt for detecting poisonous heavy metals.

We opted to analyze Attention pooling since it can be parameterized by adjusting the temperature coefficient of the sigmoid operator, Attention pooling can be analyzed across a spectrum ranging from MAX to AVG, with ATTN lying in between. 
The theoretical results hold for any graph $G^*$ with diameter 2 (i.e., star-graphs). We have depicted using 3-node graphs for simpler illustrations, but the conclusions remain unchanged for larger star-graphs. 
Empirically we find that the same trends and conclusions hold for $G^*$ with a larger diameter.

\section{Related Works} 
\label{sec:relatedworks}

Inductive biases are common in machine learning, and they shape the learned functions. Architecture-induced biases stem from design choices, enhancing performance on specific data types \citep{DBLP:journals/corr/abs-2204-07697}. 
For instance, convolutional neural networks with max-pooling exhibit translation invariance over images, while graph neural networks are permutation invariant in node ordering over graphs \citep{DBLP:journals/corr/abs-1801-01450,DBLP:conf/nips/KerivenP19}. 
Optimizer-induced biases can arise in learned models due to factors such as the loss function's nature, parameter initialization, and regularization \citep{DBLP:journals/corr/abs-2101-00072}.
In this work we characterize the nature of pooling mechanisms in learning different kinds of subgraph patterns.

Graph global pooling operators implement functions to incorporate node's information into concise and reduced graph representation.
After sufficient layers of message-passing, there are simple single-step operations such as \texttt{max, sum, avg} that take all nodes at once to derive graph's representation \citep{DBLP:conf/nips/DuvenaudMABHAA15,DBLP:conf/nips/BianchiL23}. 
In similar spirit, global attention pooling can lay emphasis on certain \textit{important} nodes and arrive at graph's representation by asymmetric weighted addition \citep{DBLP:conf/icml/LeeLK19}.
There are other stepwise pooling methods, such as ASAP, Top-k \citep{DBLP:conf/aaai/RanjanST20,DBLP:conf/icml/GaoJ19}, that coarsen the graph structure in several steps.
In our work, we focus on characterizing the implicit bias of single step global pooling mechanisms.
To the best of our knowledge, there aren't any previous works that evaluate implicit inductive bias of GNNs to preferentially learn certain functions for graph classification tasks.
There is a large gap to be filled with practical considerations of optimization and learnability. 
MPNNs can count a context of subgraph-counting, for 3 or fewer nodes in the subgraph (except star-shaped subgraphs) \citep{DBLP:conf/nips/Chen0VB20}. 
While there is a great body of work on representation power of GNNs, there is a large gap with its practical optimization and learnability property. 

%

\section{Conclusion}
We highlighted some of the gaps between the representation theory and optimization aspect of GNNs. 
We showed the existence of implicit inductive bias in GNNs, specifically we proved the bias of graph convolution networks with attention based global pooling. 
We showed theoretically and empirically the preference of attention based architectures, to look for a \textit{closely-connected} patterns for graph classification. 
We discussed the implications of our work in incorporating domain knowledge.

The implicit bias herein represents just one facet, namely, the preferential treatment of nodes occurring together or dispersed across the graph. 
Other biases may exist, such as propensity to assign different labels to homophilic/heterophilic graphs versus treating them uniformly. Investigating these nuanced relationships between GNNs, optimizers, and real-world applications are avenues for future exploration.

\begin{ack}
We gratefully acknowledge the support of Robert Bosch Centre for Data Science \& Artificial Intelligence (RBCDSAI) at Indian Institute of Technology Madras, as well as our anonymous reviewers for providing valuable feedback.
\end{ack}

\bibliography{refs.bib}

\begin{thebibliography}{33}
\providecommand{\natexlab}[1]{#1}
\providecommand{\url}[1]{\texttt{#1}}
\expandafter\ifx\csname urlstyle\endcsname\relax
  \providecommand{\doi}[1]{doi: #1}\else
  \providecommand{\doi}{doi: \begingroup \urlstyle{rm}\Url}\fi

\bibitem[Amara et~al.(2022)Amara, Ying, Zhang, Han, Zhao, Shan, Brandes,
  Schemm, and Zhang]{DBLP:conf/log/AmaraYZHZSBS022}
K.~Amara, Z.~Ying, Z.~Zhang, Z.~Han, Y.~Zhao, Y.~Shan, U.~Brandes, S.~Schemm,
  and C.~Zhang.
\newblock Graphframex: Towards systematic evaluation of explainability methods
  for graph neural networks.
\newblock In \emph{Learning on Graphs Conference, LoG 2022, 9-12 December 2022,
  Virtual Event}, volume 198 of \emph{Proceedings of Machine Learning
  Research}, page~44. {PMLR}, 2022.

\bibitem[Battaglia et~al.(2016)Battaglia, Pascanu, Lai, Rezende, and
  Kavukcuoglu]{DBLP:conf/nips/BattagliaPLRK16}
P.~W. Battaglia, R.~Pascanu, M.~Lai, D.~J. Rezende, and K.~Kavukcuoglu.
\newblock Interaction networks for learning about objects, relations and
  physics.
\newblock In D.~D. Lee, M.~Sugiyama, U.~von Luxburg, I.~Guyon, and R.~Garnett,
  editors, \emph{Advances in Neural Information Processing Systems 29: Annual
  Conference on Neural Information Processing Systems 2016, December 5-10,
  2016, Barcelona, Spain}, pages 4502--4510, 2016.

\bibitem[Bianchi and Lachi(2023)]{DBLP:conf/nips/BianchiL23}
F.~M. Bianchi and V.~Lachi.
\newblock The expressive power of pooling in graph neural networks.
\newblock In \emph{Advances in Neural Information Processing Systems 36,
  NeurIPS 2023, New Orleans, LA, USA, December 10 - 16, 2023}, 2023.

\bibitem[Bianchi et~al.(2020)Bianchi, Grattarola, and
  Alippi]{DBLP:conf/icml/BianchiGA20}
F.~M. Bianchi, D.~Grattarola, and C.~Alippi.
\newblock Spectral clustering with graph neural networks for graph pooling.
\newblock In \emph{Proceedings of the 37th International Conference on Machine
  Learning, {ICML} 2020, 13-18 July 2020, Virtual Event}, volume 119 of
  \emph{Proceedings of Machine Learning Research}, pages 874--883. {PMLR},
  2020.

\bibitem[Brody et~al.(2022)Brody, Alon, and Yahav]{DBLP:conf/iclr/Brody0Y22}
S.~Brody, U.~Alon, and E.~Yahav.
\newblock How attentive are graph attention networks?
\newblock In \emph{The Tenth International Conference on Learning
  Representations, {ICLR} 2022, Virtual Event, April 25-29, 2022}.
  OpenReview.net, 2022.

\bibitem[Chen et~al.(2020)Chen, Chen, Villar, and
  Bruna]{DBLP:conf/nips/Chen0VB20}
Z.~Chen, L.~Chen, S.~Villar, and J.~Bruna.
\newblock Can graph neural networks count substructures?
\newblock In \emph{Advances in Neural Information Processing Systems 33: Annual
  Conference on Neural Information Processing Systems 2020, NeurIPS 2020,
  December 6-12, 2020, virtual}, 2020.

\bibitem[Cheng et~al.(2022)Cheng, Wang, Zhang, and
  Zhang]{DBLP:journals/tkde/ChengWZZ22}
D.~Cheng, X.~Wang, Y.~Zhang, and L.~Zhang.
\newblock Graph neural network for fraud detection via spatial-temporal
  attention.
\newblock \emph{{IEEE} Trans. Knowl. Data Eng.}, 34\penalty0 (8):\penalty0
  3800--3813, 2022.

\bibitem[Chizat and Bach(2018)]{DBLP:conf/nips/ChizatB18}
L.~Chizat and F.~R. Bach.
\newblock On the global convergence of gradient descent for over-parameterized
  models using optimal transport.
\newblock In \emph{Advances in Neural Information Processing Systems 31: Annual
  Conference on Neural Information Processing Systems 2018, NeurIPS 2018,
  December 3-8, 2018, Montr{\'{e}}al, Canada}, pages 3040--3050, 2018.

\bibitem[Derrow{-}Pinion et~al.(2021)Derrow{-}Pinion, She, Wong, Lange, Hester,
  Perez, Nunkesser, Lee, and Velickovic]{DBLP:conf/cikm/Derrow-PinionSW21}
A.~Derrow{-}Pinion, J.~She, D.~Wong, O.~Lange, T.~Hester, L.~Perez,
  M.~Nunkesser, S.~Lee, and P.~Velickovic.
\newblock {ETA} prediction with graph neural networks in google maps.
\newblock In \emph{{CIKM} '21: The 30th {ACM} International Conference on
  Information and Knowledge Management, Virtual Event, Queensland, Australia,
  November 1 - 5, 2021}, pages 3767--3776. {ACM}, 2021.

\bibitem[Duvenaud et~al.(2015)Duvenaud, Maclaurin, Aguilera{-}Iparraguirre,
  G{\'{o}}mez{-}Bombarelli, Hirzel, Aspuru{-}Guzik, and
  Adams]{DBLP:conf/nips/DuvenaudMABHAA15}
D.~Duvenaud, D.~Maclaurin, J.~Aguilera{-}Iparraguirre,
  R.~G{\'{o}}mez{-}Bombarelli, T.~Hirzel, A.~Aspuru{-}Guzik, and R.~P. Adams.
\newblock Convolutional networks on graphs for learning molecular fingerprints.
\newblock In \emph{Advances in Neural Information Processing Systems 28: Annual
  Conference on Neural Information Processing Systems 2015, December 7-12,
  2015, Montreal, Quebec, Canada}, pages 2224--2232, 2015.

\bibitem[Errica et~al.(2020)Errica, Podda, Bacciu, and
  Micheli]{DBLP:conf/iclr/ErricaPBM20}
F.~Errica, M.~Podda, D.~Bacciu, and A.~Micheli.
\newblock A fair comparison of graph neural networks for graph classification.
\newblock In \emph{8th International Conference on Learning Representations,
  {ICLR} 2020, Addis Ababa, Ethiopia, April 26-30, 2020}. OpenReview.net, 2020.

\bibitem[Gao and Ji(2019)]{DBLP:conf/icml/GaoJ19}
H.~Gao and S.~Ji.
\newblock Graph u-nets.
\newblock In \emph{Proceedings of the 36th International Conference on Machine
  Learning, {ICML} 2019, 9-15 June 2019, Long Beach, California, {USA}},
  volume~97 of \emph{Proceedings of Machine Learning Research}, pages
  2083--2092. {PMLR}, 2019.

\bibitem[G{\'{o}}mez{-}Bombarelli et~al.(2016)G{\'{o}}mez{-}Bombarelli,
  Duvenaud, Hern{\'{a}}ndez{-}Lobato, Aguilera{-}Iparraguirre, Hirzel, Adams,
  and Aspuru{-}Guzik]{DBLP:journals/corr/Gomez-Bombarelli16}
R.~G{\'{o}}mez{-}Bombarelli, D.~Duvenaud, J.~M. Hern{\'{a}}ndez{-}Lobato,
  J.~Aguilera{-}Iparraguirre, T.~D. Hirzel, R.~P. Adams, and A.~Aspuru{-}Guzik.
\newblock Automatic chemical design using a data-driven continuous
  representation of molecules.
\newblock \emph{CoRR}, abs/1610.02415, 2016.

\bibitem[Grohe(2017)]{DBLP:books/cu/G2017}
M.~Grohe.
\newblock \emph{Descriptive Complexity, Canonisation, and Definable Graph
  Structure Theory}, volume~47 of \emph{Lecture Notes in Logic}.
\newblock Cambridge University Press, 2017.
\newblock ISBN 9781139028868.

\bibitem[Jegelka(2022)]{DBLP:journals/corr/abs-2204-07697}
S.~Jegelka.
\newblock Theory of graph neural networks: Representation and learning.
\newblock \emph{CoRR}, abs/2204.07697, 2022.

\bibitem[Kauderer{-}Abrams(2018)]{DBLP:journals/corr/abs-1801-01450}
E.~Kauderer{-}Abrams.
\newblock Quantifying translation-invariance in convolutional neural networks.
\newblock \emph{CoRR}, abs/1801.01450, 2018.

\bibitem[Keriven and Peyr{\'{e}}(2019)]{DBLP:conf/nips/KerivenP19}
N.~Keriven and G.~Peyr{\'{e}}.
\newblock Universal invariant and equivariant graph neural networks.
\newblock In \emph{Advances in Neural Information Processing Systems 32,
  NeurIPS 2019, December 8-14, 2019, Vancouver, BC, Canada}, pages 7090--7099,
  2019.

\bibitem[Kipf and Welling(2017)]{DBLP:conf/iclr/KipfW17}
T.~N. Kipf and M.~Welling.
\newblock Semi-supervised classification with graph convolutional networks.
\newblock In \emph{5th International Conference on Learning Representations,
  {ICLR} 2017, Toulon, France, April 24-26, 2017, Conference Track
  Proceedings}. OpenReview.net, 2017.

\bibitem[Kumar and Ramaswamy(2024{\natexlab{a}})]{pk2024gnnsoptimisation}
P.~K. Kumar and H.~G. Ramaswamy.
\newblock Graph classification with gnns: Optimisation, representation and
  inductive bias, 2024{\natexlab{a}}.
\newblock URL \url{https://arxiv.org/abs/2408.09266}.

\bibitem[Kumar and Ramaswamy(2024{\natexlab{b}})]{selfCode}
P.~K. Kumar and H.~G. Ramaswamy.
\newblock Graph classification with gnns : Optimisation, representation and
  inductive bias.
\newblock \url{https://github.com/pkrsn4/gnn-inductive-bias/},
  2024{\natexlab{b}}.

\bibitem[Lee et~al.(2019)Lee, Lee, and Kang]{DBLP:conf/icml/LeeLK19}
J.~Lee, I.~Lee, and J.~Kang.
\newblock Self-attention graph pooling.
\newblock In K.~Chaudhuri and R.~Salakhutdinov, editors, \emph{Proceedings of
  the 36th International Conference on Machine Learning, {ICML} 2019, 9-15 June
  2019, Long Beach, California, {USA}}, volume~97 of \emph{Proceedings of
  Machine Learning Research}, pages 3734--3743. {PMLR}, 2019.

\bibitem[Mehta et~al.(2022)Mehta, Pacheco, and
  Goldwasser]{DBLP:conf/acl/MehtaPG22}
N.~Mehta, M.~L. Pacheco, and D.~Goldwasser.
\newblock Tackling fake news detection by continually improving social context
  representations using graph neural networks.
\newblock In \emph{Proceedings of the 60th Annual Meeting of the Association
  for Computational Linguistics (Volume 1: Long Papers), {ACL} 2022, Dublin,
  Ireland, May 22-27, 2022}, pages 1363--1380. Association for Computational
  Linguistics, 2022.

\bibitem[Mont{\'{u}}far et~al.(2014)Mont{\'{u}}far, Pascanu, Cho, and
  Bengio]{DBLP:conf/nips/MontufarPCB14}
G.~Mont{\'{u}}far, R.~Pascanu, K.~Cho, and Y.~Bengio.
\newblock On the number of linear regions of deep neural networks.
\newblock In Z.~Ghahramani, M.~Welling, C.~Cortes, N.~D. Lawrence, and K.~Q.
  Weinberger, editors, \emph{Advances in Neural Information Processing Systems
  27: Annual Conference on Neural Information Processing Systems 2014, December
  8-13 2014, Montreal, Quebec, Canada}, pages 2924--2932, 2014.

\bibitem[Morris et~al.(2019)Morris, Ritzert, Fey, Hamilton, Lenssen, Rattan,
  and Grohe]{DBLP:conf/aaai/0001RFHLRG19}
C.~Morris, M.~Ritzert, M.~Fey, W.~L. Hamilton, J.~E. Lenssen, G.~Rattan, and
  M.~Grohe.
\newblock Weisfeiler and leman go neural: Higher-order graph neural networks.
\newblock In \emph{The Thirty-Third {AAAI} Conference on Artificial
  Intelligence, {AAAI} 2019, Honolulu, Hawaii, USA, 2019}, pages 4602--4609.
  {AAAI} Press, 2019.

\bibitem[Morris et~al.(2020)Morris, Kriege, Bause, Kersting, Mutzel, and
  Neumann]{DBLP:journals/corr/abs-2007-08663}
C.~Morris, N.~M. Kriege, F.~Bause, K.~Kersting, P.~Mutzel, and M.~Neumann.
\newblock Tudataset: {A} collection of benchmark datasets for learning with
  graphs.
\newblock \emph{CoRR}, abs/2007.08663, 2020.

\bibitem[Poggio and Liao(2021)]{DBLP:journals/corr/abs-2101-00072}
T.~A. Poggio and Q.~Liao.
\newblock Explicit regularization and implicit bias in deep network classifiers
  trained with the square loss.
\newblock \emph{CoRR}, abs/2101.00072, 2021.

\bibitem[Ranjan et~al.(2020)Ranjan, Sanyal, and
  Talukdar]{DBLP:conf/aaai/RanjanST20}
E.~Ranjan, S.~Sanyal, and P.~P. Talukdar.
\newblock {ASAP:} adaptive structure aware pooling for learning hierarchical
  graph representations.
\newblock In \emph{The Thirty-Fourth {AAAI} Conference on Artificial
  Intelligence, {AAAI} 2020, New York, NY, USA, February 7-12, 2020}, pages
  5470--5477. {AAAI} Press, 2020.

\bibitem[Schlichtkrull et~al.(2021)Schlichtkrull, Cao, and
  Titov]{DBLP:conf/iclr/SchlichtkrullCT21}
M.~S. Schlichtkrull, N.~D. Cao, and I.~Titov.
\newblock Interpreting graph neural networks for {NLP} with differentiable edge
  masking.
\newblock In \emph{9th International Conference on Learning Representations,
  {ICLR} 2021, Virtual Event, Austria, May 3-7, 2021}. OpenReview.net, 2021.

\bibitem[Telgarsky(2016)]{DBLP:conf/colt/Telgarsky16}
M.~Telgarsky.
\newblock benefits of depth in neural networks.
\newblock In V.~Feldman, A.~Rakhlin, and O.~Shamir, editors, \emph{Proceedings
  of the 29th Conference on Learning Theory, {COLT} 2016, New York, USA, June
  23-26, 2016}, volume~49 of \emph{{JMLR} Workshop and Conference Proceedings},
  pages 1517--1539. JMLR.org, 2016.

\bibitem[Velickovic et~al.(2018)Velickovic, Cucurull, Casanova, Romero,
  Li{\`{o}}, and Bengio]{DBLP:conf/iclr/VelickovicCCRLB18}
P.~Velickovic, G.~Cucurull, A.~Casanova, A.~Romero, P.~Li{\`{o}}, and
  Y.~Bengio.
\newblock Graph attention networks.
\newblock In \emph{6th International Conference on Learning Representations,
  {ICLR} 2018, Vancouver, BC, Canada, April 30 - May 3, 2018, Conference Track
  Proceedings}. OpenReview.net, 2018.

\bibitem[Wu et~al.(2017)Wu, Ramsundar, Feinberg, Gomes, Geniesse, Pappu,
  Leswing, and Pande]{DBLP:journals/corr/WuRFGGPLP17}
Z.~Wu, B.~Ramsundar, E.~N. Feinberg, J.~Gomes, C.~Geniesse, A.~S. Pappu,
  K.~Leswing, and V.~S. Pande.
\newblock Moleculenet: {A} benchmark for molecular machine learning.
\newblock \emph{CoRR}, abs/1703.00564, 2017.

\bibitem[Xu et~al.(2019)Xu, Hu, Leskovec, and Jegelka]{DBLP:conf/iclr/XuHLJ19}
K.~Xu, W.~Hu, J.~Leskovec, and S.~Jegelka.
\newblock How powerful are graph neural networks?
\newblock In \emph{7th International Conference on Learning Representations,
  {ICLR} 2019, New Orleans, LA, USA, May 6-9, 2019}. OpenReview.net, 2019.

\bibitem[Ying et~al.(2019)Ying, Bourgeois, You, Zitnik, and
  Leskovec]{DBLP:conf/nips/YingBYZL19}
Z.~Ying, D.~Bourgeois, J.~You, M.~Zitnik, and J.~Leskovec.
\newblock Gnnexplainer: Generating explanations for graph neural networks.
\newblock In \emph{Advances in Neural Information Processing Systems 32: Annual
  Conference on Neural Information Processing Systems 2019, NeurIPS 2019,
  December 8-14, 2019, Vancouver, BC, Canada}, pages 9240--9251, 2019.

\end{thebibliography}


\supplementaryOn{

\clearpage
\onecolumn

\begin{table}[!h]
\renewcommand*{\arraystretch}{1.4}
\captionz{Table of notations used in the paper}
\begin{tabular}{l|l}
\hline
$\data$ &  Set of all graphs $(X,A)$ of fixed size $|X|=N$\\
$G \in \data $ & $(X,A)\in (\real^{N \times d}, \real^{N \times N})$ Graph with feature and adjacency matrices \\
$G^*$  & $(X^*,A^*)\in (\real^{M \times d}, \real^{M \times M})$ Graph with $|X^*|=M$ nodes, where $M<N$ \\
$[N]$ & Set of natural numbers upto $N$, i.e, $\{1,2,3 \ldots N\}$ \\
$\struct $ &  $\in [N]^M = (s_1, s_2, \ldots s_M)$ ordered tuple of indices in $[N]$ and cardinality $M$ \\ 
$X_{[\struct,:]}$ & $\in \real^{M\times d}$ contains the rows of $X$ corresponding to indices in ordered tuple $\struct$ with  $|\struct|=M$ \\
$A_{[\struct,\struct]}$ &  $\in \real^{M\times M}$ contains rows and columns corresponding to indices in ordered tuple $\struct$ with  $|\struct|=M$  \\
$\data_1$  &  $\{G \in \data ~\vert~ \exists \struct = (s_1, s_2 \ldots s_M) \in [N]^M\}$, such that $X_{[\struct,:]} = X^*$ and $A_{[\struct,\struct]} \geq A^*$ \\
& for some ordered tuple $\struct$. Here matrix comparison is element-wise\\
$\data_0$  &   $\{G \in \data ~\vert~ X_\struct \neq X^*, ~\forall \struct \in [N]^M\}$\\
$\data_\perp$  &   $\{G \in \data ~\vert~  ~\exists \struct\in [N]^M ~\text{s.t}~ X_\struct = X^* ~\text{and}~ G \notin D_1\}$\\
\hline
For a given graph G & \\
$y_G \in \{0,1\}$  & True label of graph~ $y_G = 1, ~\text{if}~ G\in\data_1$, ~~ $y_G = 0, ~\text{if}~ G\in\data_0$\\
$N$  & Number of nodes in graph \\
$d = \texttt{maxdegree}(G)$ & Maximum degree among all nodes in the graph \\
$x_i = X_{i,:}$ & $\in \real^d$ feature vector of node $i$\\
$V = AX$ & $ \in \real^{n \times d}$ feature matrix after 1-hop GCN aggregation \\ 
$v_i = \sum\limits_{j \in \ngbr{i}} x_j$ & $= V_{i,:}$ feature vector of node $i$ after 1-hop GCN aggregation\\ 
$\alpha_i = \mathrm{softmax}_i(\attn\tp v_i)$ &  $\in [0,1]$ normalized attention score for each node $i$  in the Graph \\
$\aggr = \sum\limits_{i=1}^N \alpha_i v_i$  & $\in \real^d$  weighted sum of all nodes in the graph\\
$p = \sigma ( \wgt\tp \aggr ) $ &  $\in [0,1]$ Model's prediction \\
$\loss=$  BCE Loss & $- \Big[ y \log(p) + (1-y)\log(1-p) \Big]$ \\
$\psi_s$ & ${v^*}\tp\wgt$ \\
\hline
$\attn \in \real^d$ & Attention parameter \\
$\wgt \in \real^d$ & Linear classifier parameter \\
\end{tabular}
\end{table}

\section{Appendix}

To quantify the evolution of model parameters with respect to time, we 
analyze the change in loss function with respect to pre-activation logits and $\wgt$, 
\begin{eqnarray}
\dd{\loss}{~\wgt\tp \aggr} &=& \sigma( \wgt\tp \aggr)-y \nonumber  \\
&=& p-y \nonumber \\
- \dd{\loss}{\wgt} &=& -\dd{\loss}{~\wgt\tp \aggr} \cdot \dd{~\wgt\tp \aggr}{\wgt} \nonumber \\
&=&  (y-p) \aggr  \label{eqn:dwdt}
\end{eqnarray}

\noindent
Similarly, for the rate of change of parameter $\attn$
\begin{align*}
- \dd{\loss}{\attn} &=&& -\dd{\loss}{~\wgt\tp \aggr} \cdot \dd{~\wgt\tp \aggr}{\attn}   \\
&=&&  (y-p) \cdot \dd{\big[\wgt\tp \sum_{i=1}^N \alpha_i v_i  \big]}{\attn}   ~ \mathrm{here}~ \alpha_i = \mathrm{softmax}_i (\attn\tp v_i )  \\
&=&& (y-p) \cdot \wgt\tp \dd{~}{\attn}\left[ \frac{\exp(\attn\tp v_i)}{Z} v_i\right] ~\mathrm{here} ~ Z = \sum\limits_{i=1}^N e^{(\attn\tp v_i)} \\
&=&& (y-p) \cdot \wgt\tp \left[ \frac{\exp(\attn\tp v_i)~ v_i v_i\tp}{Z} - \frac{\exp(\attn\tp v_i)}{Z} \frac{\exp(\attn\tp v_i\tp)}{Z}  \right]   
\\
&=&& (y-p) \cdot \sum_{i=1}^N (v_i - \aggr)~v_i\tp   \cdot \wgt   \numberthis \label{eqn:dadt}
\end{align*}

\noindent
\textbf{At initialization}   
the parameters are assigned values $\attn = 0, \wgt=0$ (at time $t=0$). 
For the entire dataset $\data$, we have the following conditions at the beginning of optimization
\begin{eqnarray*}
\aggr &=&  \frac{1}{N} \sum_{i=1}^N v_i ~~~ \text{because $\alpha_i$ = softmax$(\mathbf{0}\tp v_i )$} \\
\DD{\wgt}{t} &=& (y-p)\cdot \frac{1}{N} \sum_{i=1}^N v_i  \\
\DD{\attn}{t} &=& (y-p) \cdot \sum_{i=1}^N (v_i - \aggr)~v_i\tp   \cdot \mathbf{0}\\
&=&  0 \\
\end{eqnarray*}

\subsubsection*{Proof of Lemma \ref{lem:vsdw}}

\textbf{Lemma \ref{lem:vsdw}}~[Orientation of $\wgt$ parameter]
\textit{There exists a vector $v^* \in \real^d$, such that the component of $\DD{\wgt}{t}$ during optimization is always aligned along $v^*$ and  
the weight parameter $\wgt$ moves in the direction of ${v^*}$,
$${v^*}\tp \DD{\wgt}{t} > 0 $$  }
\begin{proof}
Following from Equation.\ref{eqn:dwdt}, we substitute $\hat{v} =  \sum_{i=1}^N \alpha_i v_i$ and separate out the term containing $v^*$.
Note that our assumption of data completeness implies that while expanding $\hat{v}$, the additional terms of $w_s$, as seen in Equation.\ref{eqn:vhat}, is found in both $\data_1$ as well as $\data_0$. 
Thus, due to the presence of $w_s$ in both label $y=1$ and $y=0$, the optimizer would cancel the update. Only $v^*$ emerges to be present in all $\data_1$ graphs, and simultaneously absent in $\data_0$. 
\begin{eqnarray}
v\tp \DD{\wgt}{t} &=& (1-p)\Big[ \alpha_s \vjv{v^*} + \sum_{i \neq s} \alpha_i \vjv{v_i} \Big] - p \sum_{i \neq s} \beta_{i} \vjv{v_i}  \nonumber \\
&= & (1-p)\Big[ \alpha_s \theta + \sum_{i \neq s} \alpha_i \theta \Big] - p  \sum_{i\neq s} \beta_i \theta 
\nonumber \\
&=& (1-p)\Big[ \alpha_s \theta + \sum_{i \neq s} \alpha_i\theta \Big] - p \theta   \label{eqn:vjdwdt}
\end{eqnarray}
The term $\beta_i v_i\tp v_i$ in Equation.\ref{eqn:vjdwdt} is absorbed in $\beta_i \theta$ due to $\alpha_s > \beta_i$.
Full proof, without this simplifying assumption, along with expanded $\beta_i$ terms are provided in supplementary material.
 Similarly we expand the gradient's component along $v^*$ to get, 
\begin{eqnarray}
{v^*}\tp \DD{\wgt}{t} &=& (1-p)\Big[ \alpha_s \vstarv{{v^*}} + \sum_{i \neq s} \alpha_i \vstarv{v_i} \Big] - p  \sum_{i \neq s} \beta_i \vstarv{v_i}  \nonumber \\
& = &  (1-p)\Big[ \alpha_s \theta_d + \sum_{i \neq s} \alpha_i \theta \Big] - p  \sum_{i \neq s} \beta_i \theta 
\nonumber \\
&=& (1-p)\Big[ \alpha_s \theta_d + \sum_{i \neq s} \alpha_i \theta \Big] - p \theta \label{eqn:vsdwdt} 
\end{eqnarray}
Further, for a well trained GNN model with low training loss, the prediction probability satisfies $p<0.5$ in Eqn.\ref{eqn:vjdwdt}, making $1-2p > 0$. 
Given that $\theta_d > \theta$, upon comparing  Eqn.\ref{eqn:vjdwdt} and Eqn.\ref{eqn:vsdwdt},  we obtain 
${v^*}\tp \DD{\wgt}{t} \geq v\tp \DD{\wgt}{t} > 0$, thus proving the desired comparison.
\end{proof}

\subsubsection*{Proof of Lemma.\ref{lem:vsda}}

\textbf{Lemma.\ref{lem:vsda}}~[Orientation of $\attn$ parameter]
\textit{There exists a vector $v^* \in \real^d$, such that the component of $\DD{\attn}{t}$ during optimization is always positive along $v^*$. 
After sufficient optimizer updates, the attention parameter $\attn$ moves most along the direction of ${v^*}$, i.e., for any other $v$ of magnitude, $\Vert v \Vert = \Vert v^* \Vert$, we show that 
$${v^*}\tp \DD{\attn}{t} > 0 $$ }

\begin{proof}
Let the assumptions of data-completeness, GNN model, label-imbalance etc., as mentioned in Section \ref{sec:setting} still hold.
Let $\psi_s = {v^*}\tp\wgt$ and $\psi = v_i\tp\wgt$.
\begin{eqnarray}
{v^*}\tp \DD{\attn}{t} &=& (1-p)\Big[ \alpha_s (\vstarv{v^*} - {v^*}\tp \aggr) {v^*}\tp \wgt + \sum_{i\neq s} \alpha_i (\vstarv{v_i} - {v^*}\tp \aggr)v_i\tp \wgt  \Big] - p \Big[ \sum_i \beta_i (\vstarv{v_i} - {v^*}\tp \aggr)v_i\tp\wgt   \Big] \nonumber \\
&=&  (1-p) \Big[ \alpha_s\psi_s (\vstarv{v^*} - {v^*}\tp \aggr)  + \sum_{i\neq s} \alpha_i\psi (\vstarv{v_i} - {v^*}\tp \aggr)  \Big]  -p \Big[ \sum_i \beta_i\psi(\vstarv{v_i} - {v^*}\tp\aggr) \Big]    \nonumber    \\
&=&  (1-p) \Big[ \alpha_s \psi_s (\theta_d - \alpha_s\theta_d - (1-\alpha_s)\theta)  + \sum_{i\neq s} \alpha_i\psi(\theta - \alpha_s\theta_d - (1-\alpha_s)\theta)\Big] -p  \sum_i \beta_i \psi (\theta - \theta)   \nonumber  \\
&=&  (1-p) \Big[ \alpha_s \psi_s (1-\alpha_s)(\theta_d - \theta) -(1-\alpha_s)\psi \alpha_s (\theta_d-\theta)  \Big]   \nonumber  \\
&=&  (1-p)\alpha_s  (1-\alpha_s) (\psi_s-\psi) (\theta_d-\theta)  \nonumber   \\
&=&  (1-p)\alpha_s (1-\alpha_s) \psi (q-1) (\theta_d-\theta)   \label{eqn:vsdadt}
\end{eqnarray}

We can see that $\psi_s \geq \psi$, 
now let $\psi_s = q\cdot \psi$, for some scalar $q\geq 1$. 
\begin{eqnarray}
v\tp \DD{\attn}{t} &=&  (1-p)\Big[ \alpha_s (\vjv{v^*} - v\tp \aggr) {v^*}\tp \wgt + \sum_{i\neq s} \alpha_i (\vjv{v_i} - v\tp \aggr)v_i\tp \wgt  \Big] - p \Big[ \sum_i \beta_i (\vjv{v_i} - v\tp \aggr)v_i\tp\wgt   \Big] \nonumber  \\
&=&  (1-p)\Big[ \alpha_s \psi_s (\vjv{v^*} - v\tp \aggr) + \sum_{i\neq s} \alpha_i \psi (\vjv{v_i} - v\tp \aggr) \Big]  -p \Big[ \sum_i \beta_i \psi  (\vjv{v_i} - v\tp \aggr) \Big]  \nonumber \\
&=&  (1-p)\Big[ \alpha_s \psi_s (\theta  -\alpha_j\theta_d -(1-\alpha_j)\theta) + \sum_{i\neq s} \alpha_i \psi (\theta  -\alpha_j\theta_d -(1-\alpha_j)\theta)     \Big]  -p \Big[ \sum_i \beta_i \psi (\theta - \beta_j \theta_d - (1-\beta_j)\theta) \Big]  \nonumber \\
&=&  (1-p)\left[ \alpha_s\psi_s \alpha_j (\theta -\theta_d)  + (1-\alpha_s)\psi \alpha_j (\theta -\theta_d)  \right]  -p\left[\psi \beta_j (\theta -\theta_d) \right]    \nonumber\\
&=&  (1-p)\big( \psi \alpha_j (\theta -\theta_d)  (\alpha_s q  + 1-\alpha_s) \big)  -p\psi \beta_j (\theta -\theta_d)     \label{eqn:vjdadt}
\end{eqnarray} 

From Eqn.\ref{eqn:vsdadt} and Eqn.\ref{eqn:vjdadt}, we note that 
\begin{equation}
{v^*}\tp \DD{\attn}{t} > v\tp \DD{\attn}{t} ~~~\text{if}~~ q > \frac{\beta_j -\alpha_j}{\alpha_s(1-\alpha_s)+ \alpha_j \alpha_s} + 1~~; j\in [N] \label{eqn:condition}
\end{equation} 
 where $q = (\psi_s / \psi)$. We know from Lemma \ref{lem:vsdw} that ${v^*}\tp \DD{\wgt}{t} \geq v_j\tp \DD{\wgt}{t} > 0, ~~\forall j \in [N]/s$. 
Since the component of $\wgt$ along ${v^*}$  monotonically increases, because $\wgt$ moves in its direction of more than any other $v_j$, we can see that the condition in Eqn.\ref{eqn:condition} will be satisfied after sufficient iterations.
\end{proof}

\subsection*{Additional Relations}
\begin{eqnarray*}
{v^*}\tp v^* &=&  \theta_d  \\
{v^*}\tp v_i &=&  \theta  \\
\sum_{i=1}^N \alpha_i &=& 1 
\end{eqnarray*}

For graph with label $y=1$
\begin{eqnarray*}
{v^*}\tp \aggr &=& {v^*}\tp (\alpha_s {v^*} + \sum_{i\neq s} \alpha_i v_i) \\
    &=& \alpha_s \theta_d + (1-\alpha_s)\theta \\
\\
v_j\tp \aggr &=& v_j\tp \Big(\alpha_s {v^*} + \alpha_j v_j + \sum_{i\neq s,j} \alpha_i v_i\Big) \\
      &=& \alpha_s\theta + \alpha_j\theta_d + \sum_{i\neq s,j} \alpha_i \theta \\
      &=& \alpha_j \theta_d + (1-\alpha_j)\theta \\
\end{eqnarray*}

For graph with label $y=0$
\begin{eqnarray*}
{v^*}\tp \aggr &=& \sum_{i} \beta_i {v^*}\tp v_i \\
    &=& \theta \\
\\
v_j\tp \aggr &=& v_j\tp \Big(\beta_j v_j + \sum_{i\neq j} \beta_i v_i\Big) \\
      &=& \beta_j \theta_d + (1-\beta_j)\theta \\     
\end{eqnarray*}

\begin{lemma}[Learning Updates Preserve the Assumptions] \label{lem:preserve}
For all $v_i \in \real^d$, such that  $v = \sum_{i=1}^N \eta_i x_i $, where $\eta_i \in \mathbb{Z}_+$ along with condition $\sum_{i=1}^N \eta_i \leq d$ max-degree, and $x_i$ are initial node-representations,  
the validity of both the conditions \textit{(i)} $\wgt\tp {v^*} >\wgt\tp v_i$  and \textit{(ii)} $\attn\tp {v^*} > \attn\tp v_i,  ~~\forall i \in [N] $ in gradient flow  of $\attn$ and $\wgt$ is upheld throughout their learning trajectory.  
\end{lemma}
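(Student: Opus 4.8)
The plan is to read this as a forward-invariance (continuous induction) statement that closes the bootstrapping loop with Lemmas \ref{lem:vsdw} and \ref{lem:vsda}. First I would package the two conditions into gap functions
$$g_w^{(i)}(t) = {v^*}\tp \wgt(t) - v_i\tp \wgt(t), \qquad g_a^{(i)}(t) = {v^*}\tp \attn(t) - v_i\tp \attn(t),$$
so that (i) and (ii) read exactly $g_w^{(i)}(t) > 0$ and $g_a^{(i)}(t) > 0$ for every admissible representation $v_i$. Since the admissibility constraint $v = \sum_i \eta_i x_i$ with $\eta_i \in \mathbb{Z}_+$ and $\sum_i \eta_i \leq d$ describes only finitely many reachable node vectors (the $x_i$ are orthogonal and the degree is bounded), there are only finitely many gaps to control, and all maxima/minima below are attained.

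Next I would differentiate along the flow: by the chain rule $\DD{g_w^{(i)}}{t} = (v^* - v_i)\tp \DD{\wgt}{t}$ and $\DD{g_a^{(i)}}{t} = (v^* - v_i)\tp \DD{\attn}{t}$. The content of Lemma \ref{lem:vsdw} is precisely ${v^*}\tp \DD{\wgt}{t} \geq v_i\tp \DD{\wgt}{t}$, giving $\DD{g_w^{(i)}}{t} \geq 0$, and the content of Lemma \ref{lem:vsda} is ${v^*}\tp \DD{\attn}{t} > v_i\tp \DD{\attn}{t}$ after sufficiently many updates, giving $\DD{g_a^{(i)}}{t} > 0$ once the attention dynamics switch on. Thus each gap is monotone once the precondition of the relevant lemma is met.

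The delicate point is the initialization and the circularity. At $t = 0$ we have $\wgt = \attn = 0$, so every gap starts at exactly $0$ rather than strictly positive, and moreover $\DD{\attn}{t}\big|_{t=0} = 0$ because the $\attn$-gradient in Eqn.\ref{eqn:dadt} carries a factor of $\wgt = 0$. I would therefore argue in two stages. First, completeness of $\data_1, \data_0$ forces the summed $\wgt$-gradient to retain a net component only along $v^*$ (every other direction $w_s$ appears in both classes and cancels, exactly as in the proof of Lemma \ref{lem:vsdw}), so $g_w^{(i)}$ becomes strictly positive for $t > 0$ and thereafter non-decreasing. Second, once $\psi_s = {v^*}\tp \wgt$ has grown enough that the threshold $q > \tfrac{\beta_j - \alpha_j}{\alpha_s(1-\alpha_s) + \alpha_j \alpha_s} + 1$ of Eqn.\ref{eqn:condition} holds, the $\attn$-gradient turns on and drives $g_a^{(i)}$ strictly upward. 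To avoid circular reasoning I would use a first-exit-time argument: set $T = \sup\{\,t : g_w^{(i)}(s) \geq 0 \text{ and } g_a^{(i)}(s) \geq 0 \text{ for all admissible } i \text{ and all } s \leq t\,\}$; on $[0,T]$ the hypotheses of both earlier lemmas hold, so the derivative computation shows every gap is non-decreasing there, hence no gap can return to $0$ from a positive value, and by continuity $T$ cannot be finite, giving $T = \infty$.

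I expect the main obstacle to be exactly this bootstrapping: the earlier lemmas \emph{assume} domination of $v^*$ to derive the gradient directions, while this lemma needs those very directions to re-establish domination. Disentangling them cleanly — verifying that the set $\{g_w^{(i)} \geq 0,\, g_a^{(i)} \geq 0\}$ is forward-invariant for the coupled $(\wgt,\attn)$ flow, and carefully handling the initial dead-zone where $\attn$ has not yet begun to move — is the crux. The inner-product bookkeeping ($\theta_d \geq \theta$ and the degree constraint governing which $\eta_i$ are admissible) is routine once this invariance framework is in place.
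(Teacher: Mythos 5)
Your proposal is correct, and its core step coincides with the paper's: both arguments reduce the claim to the increment inequalities ${v^*}\tp \DD{\wgt}{t} \geq v_i\tp \DD{\wgt}{t}$ and ${v^*}\tp \DD{\attn}{t} > v_i\tp \DD{\attn}{t}$ supplied by Lemmas \ref{lem:vsdw} and \ref{lem:vsda}, so that adding the update to both sides of the standing inequality preserves the ordering. The difference is in the scaffolding. The paper's proof is a bare one-step discrete induction --- assume $\wgt\tp v^* > \wgt\tp v_i$ and $\attn\tp v^* > \attn\tp v_i$ at time $t$, add $\DD{\wgt}{t}$ and $\DD{\attn}{t}$, cite the two lemmas, done --- and it never addresses the points you flag: the base case (at $t=0$ one has $\wgt = \attn = 0$, so both conditions hold with \emph{equality}, not strictly), the dead zone where $\DD{\attn}{t}\big|_{t=0} = 0$ and Lemma \ref{lem:vsda}'s strict inequality only activates once the threshold of Eqn.\ref{eqn:condition} is met, and the circularity whereby the earlier lemmas' derivations implicitly use the very domination of $v^*$ that this lemma is supposed to maintain. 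Your continuous-time treatment --- finitely many gap functions $g_w^{(i)}, g_a^{(i)}$, the two-stage argument that $g_w^{(i)}$ becomes strictly positive for $t>0$ (via completeness cancelling the $w_s$ directions) before the attention dynamics switch on, and the first-exit-time argument giving forward invariance of the coupled flow --- fills exactly these holes. So: same essential route, but your version makes rigorous what the paper compresses into the phrase ``holds naturally,'' and it is the version one would actually need if the lemma is to carry the weight that Theorem \ref{thm:main} places on it.
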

\begin{proof}
Extending the observations made in Lemma.\ref{lem:vsdw} and  \ref{lem:vsda} above, at time $t$, to the next time-step $t+1$ of optimization, shows that the validity of assumption holds naturally.
\begin{eqnarray*}
\left[ \wgt + \DD{\wgt}{t} \right]\tp {v^*} & > & \left[ \wgt + \DD{\wgt}{t} \right]\tp v_i   \\
\wgt\tp {v^*} + \DD{\wgt}{t}\tp {v^*}  &>&  \wgt\tp v_i + \DD{\wgt}{t}\tp v_i  ~~~~~\text{from (Lemma.\ref{lem:vsdw})}\\
\left[ \attn + \DD{\attn}{t} \right]\tp {v^*} & > & \left[ \attn + \DD{\attn}{t} \right]\tp v_i   \\
 \attn\tp {v^*} + \DD{\attn}{t}\tp {v^*} & > &  \attn\tp v_i + \DD{\attn}{t}\tp v_i   ~~~~~\text{from (Lemma.\ref{lem:vsda})}
\end{eqnarray*}
\end{proof}

\begin{figure*}[t]
    \centering
       \includegraphics[width=0.8\textwidth]{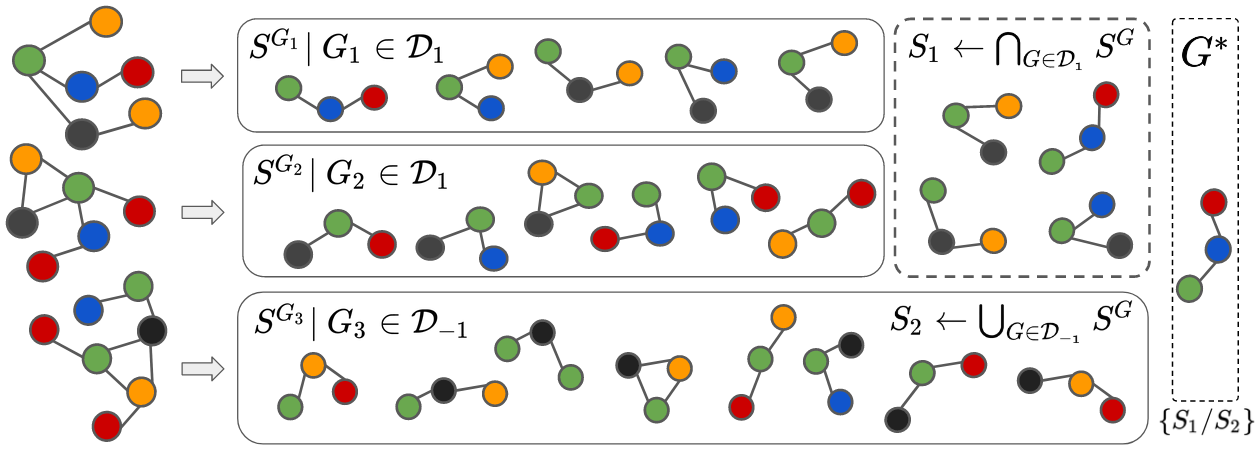}
        \captionz{Pictorial depiction of the exhaustive-subgraph-search Algorithm.\ref{alg:cap}. First, we take intersection of all subgraphs of size $<k$, belonging to $\data_1$, and subsequently remove any subgraph that is found in $D_{-1}$}. \label{fig:alg}
\end{figure*}

\subsection{Exhaustive Subgraph Search Algorithm}
Given training datasets as described in Section \ref{sec:subgraph-find}, the following algorithm can find a subgraph $G^*$ that will be the discriminating feature between $\data_1$ and $\data_{-1}$. Figure \ref{fig:alg} shows the step-wise computations for the same. 

\begin{algorithm}
\caption{Graph classification based on exhaustive search for presence of a subgraph}\label{alg:cap}
\begin{algorithmic}
\Require Dataset $\data = \{\data_{1} \cup \data_{-1} \}$, Upper bound $k$ on number of nodes in $G^*$ 

\State $S^G \gets$ Set of all subgraphs of $G$ of size less than $k$
\State $S_1 \gets$ Intersection of subgraphs in $\{S^G\}$ where $G\in \data_1$
\State $S_0 \gets$ Union of subgraphs in $\{S^G\}$ where $G\in \data_0$
\State $G^* \gets$  Set difference $\{S_1\} \backslash \{S_0\} $

\end{algorithmic}
\end{algorithm}

\subsection{Planting $G^*$ in realworld data $\data$:} \label{appdx:realdata}

For any given graph $G$ sourced from a real dataset $\data$, we introduce chain subgraphs $G^*$ spanning sizes ranging from 3 to 5 nodes.
For $\data_1$, we randomly select anchor nodes smaller in size than $G^*$, and establish connections between these nodes and $G$. Conversely, in $\data_0$, we take two subsets $S_1$ and $S_2$ of nodes within $G^*$, such that their union remains a proper subset of $G^*$, ensuring at least one node exclusively belongs to $G^*$ and not to any of the planted subsets. 
We further choose random anchor nodes to facilitate  edge connections.
We initialize $\data_1$ first, followed by constructing $\data_0$, thus we preserve dataset size parity.
Maintaining balance in class distribution is crucial during the creation of training data, as it ensures equal representation within $\data_1, \data_0$, and mitigates label-imbalance issue. 
Subsequently, we form $\dataP$ by selecting non-neighboring nodes as anchors for each node within $G^*$. 
An important distinction here is that these generated graphs remain unseen during training, thereby eliminating constraints on the size of $\dataP$. Table \ref{tab:realworld-dataprop} presents key statistics concerning the generated data partitions.
The \textit{Random-colors} in the table depict the number of distinct node-features overall in the dataset, which includes the distinct colors added by the implanted $G^*$. 
For example ZINC dataset comprises of molecule-graphs that are made up of 21 kinds of atoms (carbon, oxygen, other heavy atoms), and we further add $G^*$ with 3 other kinds of nodes.

Our test data volume exceeds that of the training data by an order of magnitude, to maintain high statistical accuracy. 
The feature space of the dataset is encapsulated by the metric "Random-colors", indicating the total count of distinct node features, inclusive of those introduced by the embedded $G^*$. For instance, in the ZINC dataset comprising molecular graphs characterized by 21 atom types, we augment $G^*$ with 3 additional node types. Conforming to our prior methodology, we construct a subgraph $\mathcal{G}^*$ consisting of 3 connected nodes with distinct features. Embedding $\mathcal{G}^*$ involves randomly establishing edges between any two nodes within $\mathcal{G}$ and $\mathcal{G}^*$, subsequently labeling it as $y=1$. Similarly, we generate counterpart labels ($y=0$) by incorporating only a subset of nodes $\hat{\mathcal{G}} \subset \mathcal{G}^*$ into ZINC graphs.

\begin{figure}[t!]
    \centering 
    \begin{subfigure}[t]{0.26\textwidth}
        \centering
        \includegraphics[width=0.9\textwidth]{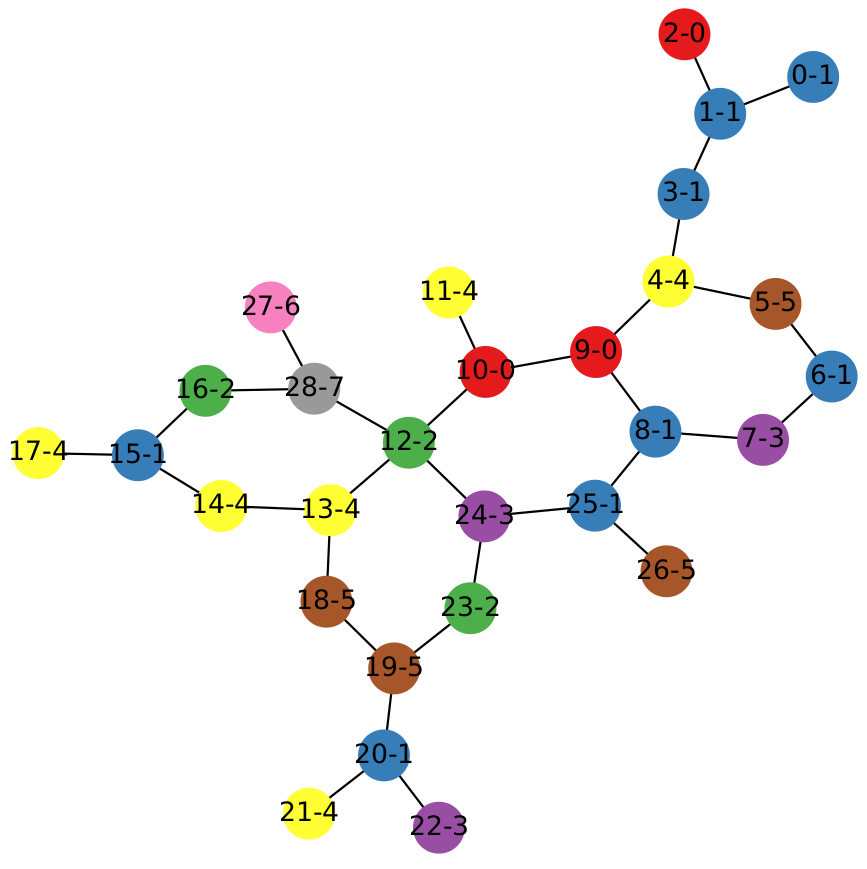}
        \captionz{Tox 21} \vspace{3mm} \label{fig:realdata.g2}
    \end{subfigure} 
     \begin{subfigure}[t]{0.26\textwidth}
        \centering
        \includegraphics[width=0.85\textwidth]{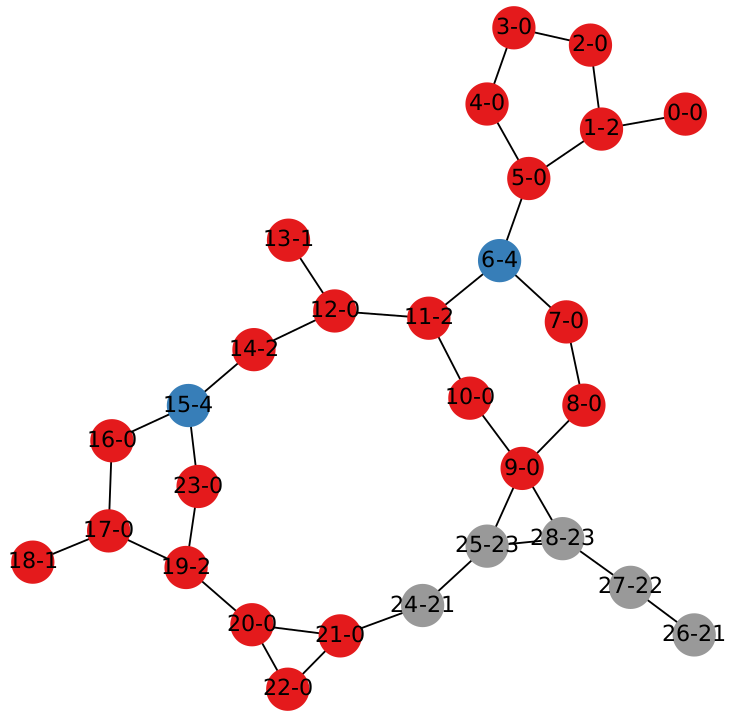}
        \captionz{ZINC} \label{fig:realdata.g3}
    \end{subfigure}
    \begin{subfigure}[t]{0.25\textwidth}
        \centering
        \includegraphics[width=0.85\textwidth]{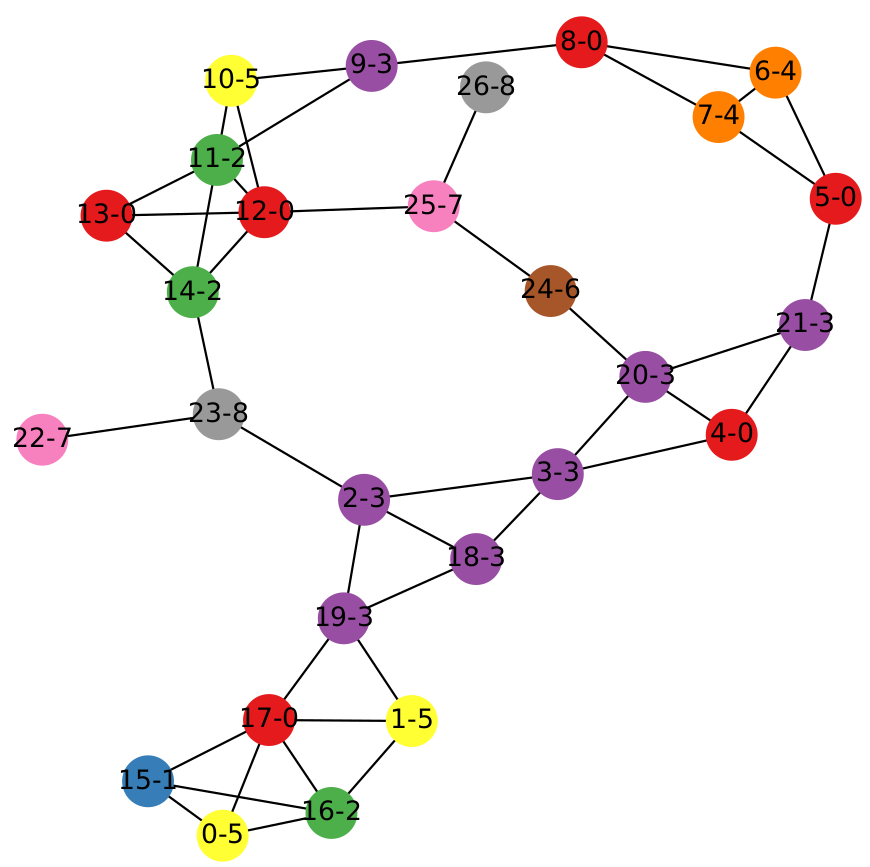}
        \captionz{Protein} \label{fig:realdata.g5}
    \end{subfigure}
    \captionz{Sample graphs from real-world datasets that are planted with $G^*$, both with occurrence and connected-ness properties. Different colors depict the node features. 
           \label{fig:realdata}
          }
          \vspace{3mm}
\end{figure}

}

\end{document}